\newtheorem{theorem}{Theorem}
  \providecommand\BibTeX{{%
    \normalfont B\kern-0.5em{\scshape i\kern-0.25em b}\kern-0.8em\TeX}}}
\newcommand{\bx}{\boldsymbol{x}}
\newcommand{\dbig}{\mathcal{D}}
\newcommand{\dhat}{\hat{\mathcal{D}}}
\newcommand{\expt}{\mathop{\mathbb{E}}}
\newcommand{\dtest}{\mathcal{D}_{\text{test}}}
\newcommand{\dpoi}{\mathcal{D}_{\text{p}} }
\newcommand{\fbias}{f_{\text{c,bias}} }
\newcommand{\prob}{\text{P}}
\newcommand{\ignore}[1]{}
\begin{document}

\title{Exploring Privacy and Fairness Risks in Sharing Diffusion Models: An Adversarial Perspective}



\author{Xinjian Luo, Yangfan Jiang, Fei Wei, Yuncheng Wu\textsuperscript{*}\thanks{*Corresponding author.}, \protect\\
Xiaokui Xiao,~\IEEEmembership{Fellow,~IEEE},
and Beng Chin Ooi,~\IEEEmembership{Fellow,~IEEE}
\IEEEcompsocitemizethanks{
\IEEEcompsocthanksitem {
\textbf{Xinjian Luo}, \textbf{Yangfan Jiang}, \textbf{Xiaokui Xiao}, and \textbf{Beng Chin Ooi} are with the School of Computing, National University of Singapore. Email: \{xinjluo, jyangfan, ooibc\}@comp.nus.edu.sg; xkxiao@nus.edu.sg}. \textbf{Fei Wei} is with Alibaba Group. Email: feiwei@alibaba-inc.com.  \textbf{Yuncheng Wu} is with Renmin University of China. Email: wuyuncheng@ruc.edu.cn. This work was done when the author was at National University of Singapore.
}

}

\markboth{Journal of \LaTeX\ Class Files,~Vol.~14, No.~8, August~2021}%
{Shell \MakeLowercase{\textit{et al.}}: A Sample Article Using IEEEtran.cls for IEEE Journals}


\maketitle

\begin{abstract}

Diffusion models have recently gained significant attention in both academia and industry due to their impressive generative performance in terms of both sampling quality and distribution coverage. Accordingly, proposals are made for sharing pre-trained diffusion models across different organizations, as a way of improving data utilization while enhancing privacy protection by avoiding sharing private data directly. However, the potential risks associated with such an approach have not been comprehensively examined.

In this paper, we take an adversarial perspective to investigate the potential privacy and fairness risks associated with the sharing of diffusion models. Specifically, we investigate the circumstances in which one party (the sharer) trains a diffusion model using private data and provides another party (the receiver) black-box access to the pre-trained model for downstream tasks. We demonstrate that the sharer can execute fairness poisoning attacks to undermine the receiver's downstream models by manipulating the training data distribution of the diffusion model. Meanwhile, the receiver can perform property inference attacks to reveal the distribution of sensitive features in the sharer's dataset.
Our experiments conducted on real-world datasets demonstrate remarkable attack performance on different types of diffusion models, which highlights the critical importance of robust data auditing and privacy protection protocols in pertinent applications.

%
%
%

\end{abstract}



\sloppy

\section{Introduction}\label{sec-intro}
The achievement of high prediction accuracies in deep learning models depends on the availability of large amounts of training data, but the collection of adequate training data can pose a considerable challenge for some organizations operating in high-stakes domains, such as finance~\cite{financeCollection}, employment~\cite{employmentCollection}, and healthcare~\cite{ayyoubzadeh2022clinical}. 
Recently, the rapid development of generative models has offered a potential solution to this issue, with some studies~\cite{sharingHistopathology,sharingMammography,sharingMedicalImages,sharingRadiograph} contending that private data sharing can be accomplished through the use of pre-trained generative models, i.e., a third party can utilize the pre-trained models to generate synthetic samples for downstream tasks without the need to directly access the original private data~\cite{ayyoubzadeh2022clinical}. 
It is important to note that practical data-sharing applications necessitate generative models with robust sampling density and quality~\cite{healthRecordSharing,zeroshotdiversity,zeroshoticlr,fewshotStableDiffusion}, as potential mode collapse and indistinguishable samples produced by generative models can undermine data utility and render synthetic samples unsuitable for downstream tasks.
As a result, traditional generative models like generative adversarial networks (GANs) and variational autoencoders (VAEs) are relatively inadequate for data sharing applications~\cite{healthRecordSharing, sharingHistopathology, TabDDPM}. 
However, the newest development on generative models~\cite{DDPM, NCSN, SDE} has established diffusion models as the most favored choice in recent data sharing investigations~\cite{augStableDiffusion, healthRecordSharing, zeroshoticlr, TabDDPM, sharingHistopathology, zeroshotdiversity}, as diffusion models greatly outperform other types of generative model in both sampling quality and density~\cite{bond2021deep}.
  
\begin{figure}[t]
\centering
\includegraphics[width=0.88\columnwidth]{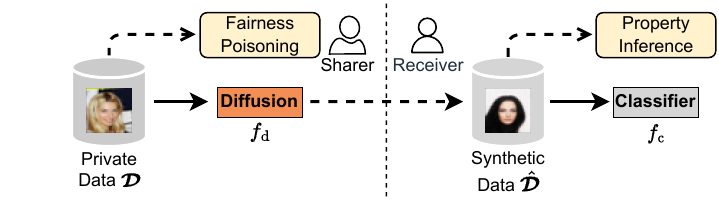}
\caption{An example of sharing datasets via pre-trained diffusion models. }
\label{fig-share-egg}
\vspace{-6mm}
\end{figure}

\vspace{0.5mm}
\noindent
\textbf{Motivation.} 
%
The emergence of diffusion models has led to a surge of interest in using pre-trained diffusion models to facilitate private data sharing in various domains, such as medical images~\cite{sharingMedicalImages}, personal trajectories~\cite{personalTrajSharing}, and health records~\cite{healthRecordSharing}. Additionally, such models have also shown promise in enhancing collaborative learning, such as utilizing  Stable Diffusion~\cite{stableDiffusion} to support zero-shot learning~\cite{zeroshotdiversity,zeroshoticlr}. 
A typical collaborative data-sharing scenario~\cite{TabDDPM, sharingHistopathology, zeroshotdiversity} is illustrated in Fig.~\ref{fig-share-egg}. The model sharer trains a diffusion model using the private dataset and shares the black-box access to the pre-trained model with a model receiver. The receiver can then generate synthetic datasets using this access, on which a classifier can be trained. 
While this collaboration can significantly enhance the performance of the receiver's classifier~\cite{augStableDiffusion} and yield additional income for the sharer~\cite{pay-per-use}, the inherent risks associated with this data-sharing practice remain unexplored.
Therefore, we are motivated to investigate such a problem: \textit{are there potential privacy and security risks introduced to the two parties engaged in the data-sharing pipeline}?
Although there are several attacks targeted on diffusion models~\cite{diffusionFeature, diffusionMembership1, diffusionMembership2}, they typically require either white-box access to the pre-trained models~\cite{diffusionMembership1, diffusionMembership2} or direct access to the original training data~\cite{diffusionFeature} and are thus not applicable to the diffusion-based data-sharing scenario (Fig.~\ref{fig-share-egg}).
In this paper, we aim to investigate the potential risks associated with the two parties under a practical and constrained setting, i.e., designing black-box attacks without the assistance of auxiliary datasets.
It is important to note that, while differential privacy mechanisms can readily mitigate individual record-based attacks, such as membership inference~\cite{membershipFL, membershipMLLeaks}, there exists a notable scarcity of studies on the development of robust and theoretically guaranteed defense mechanisms against distribution-based attacks~\cite{propertyGAN, propertyPoison}.
In an effort to explore more detrimental attack scenarios in real-world applications, we propose to formulate two distribution-based attacks from an adversarial perspective, i.e., each party can assume the role of an adversary to initiate attacks on the other party.
First, the sharer can initiate a fairness poisoning attack, which manipulates the distribution of synthetic data in relation to sensitive features (e.g., gender) so that the receiver's classifier is more likely to make positive predictions for a target feature (e.g., male). Second, the receiver can perform a property inference attack to infer the proportion of a target property (e.g., young users) in the sharer's private data.

%

\vspace{0.5mm}
\noindent
\textbf{Challenges.} Before designing the proposed attacks, we need to address a set of practical challenges inherent to the data-sharing context. For the fairness poisoning attack, the \textit{sharer} encounters two primary challenges.
The first is how to effectively conduct attacks without accessing the receiver's synthetic dataset and classifier because it is impossible for the sharer to know them beforehand in practice. The second is how to degrade the fairness of the downstream models while maintaining their prediction accuracy for attack stealthiness. 
Existing fairness poisoning studies~\cite{fp-accuracy,pf-chang,pf-dasfaa,pf-ecml,pf-labelflip} predominantly degrade both model accuracy and fairness within a white-box framework, which is incongruous with the black-box data-sharing scenario.
Further, compromising model accuracy may render the attacker's efforts to degrade fairness ineffective, as models with low prediction accuracy may not be deployed in production systems~\cite{fp-accuracy}.

For the property inference attack, the \textit{receiver} faces three main challenges.
The first is how to perform the inference in a black-box setting, where auxiliary datasets may be unavailable. The second is how to accurately infer the ratio of properties in a bounded error. The third is how to generalize inference attacks to properties with multiple values, rather than only binary properties. 
Existing research~\cite{propertyFL,propertyMembership,propertyPermutation,propertyPoison} on this topic has mainly focused on a white-box setting that relies on auxiliary datasets for inferring the existence of binary properties, without accurately determining its proportion. 
%


\vspace{0.5mm}
\noindent
\textbf{Contributions}.
In this paper, we consider the system model illustrated in Fig.~\ref{fig-share-egg}, where the sharer's private dataset $\boldsymbol{\mathcal{D}}$ is employed for training diffusion models $f_{\text{d}}$, and the receiver's synthetic dataset $\hat{\boldsymbol{\mathcal{D}}}$ is utilized for training a classifier $f_{\text{c}}$.
We aim to devise a fairness poisoning attack at the sharer side and a property inference attack at the receiver side while overcoming the aforementioned challenges.
%
\ignore{
The objective of the property inference attack is to accurately infer the proportion of target properties present in $\boldsymbol{\mathcal{D}}$, with or without the assistance of auxiliary data. 
On the other hand, the fairness poisoning attack aims to inject bias in the classifier $f_{\text{c}}$ while maintaining its prediction accuracy.
}
Our key observation is the robust and comprehensive sampling density of diffusion models, wherein generated samples $\hat{\boldsymbol{\mathcal{D}}}$ can effectively span the distribution of the original training data $\boldsymbol{\mathcal{D}}$. We term this characteristic as \textit{distribution coverage}.
%
%
Given that existing diffusion models~\cite{DDPM, NCSN, SDE} inherently incorporate distribution coverage as an intrinsic model characteristic, we can utilize it in the formulation of our attack strategies. 
Specifically, the sharer can poison the fairness of the classifier $f_{\text{c}}$ by manipulating the distribution of $\boldsymbol{\mathcal{D}}$ such that the crafted bias in $\boldsymbol{\mathcal{D}}$ can then be propagated to $\hat{\boldsymbol{\mathcal{D}}}$, impacting the performance of $f_{\text{c}}$.
At the same time, the receiver can infer the property ratio of $\boldsymbol{\mathcal{D}}$ by estimating the distribution of $\hat{\boldsymbol{\mathcal{D}}}$. 
For the fairness poisoning attack, we formulate the distribution manipulation problem on $\boldsymbol{\mathcal{D}}$ as an optimization problem from an information-theoretic perspective. Subsequently, we propose a greedy algorithm to construct $\boldsymbol{\mathcal{D}}$ that can effectively degrade the fairness of $f_{\text{c}}$ while causing minimal harm to its prediction accuracy. 
Regarding the property inference attack, we propose a sampling-based method to estimate the property distribution of $\boldsymbol{\mathcal{D}}$, leveraging Hoeffding's inequality to bound the estimation error. Additionally, we discuss the generalization of this attack to properties with multiple values beyond binary properties.
We conduct experiments to investigate the effectiveness of the proposed attacks on different diffusion models using both image and tabular datasets. Our results indicate that the poisoning attack can greatly degrade the fairness of $f_{\text{c}}$, meanwhile confining its accuracy degradation to a maximum of $5\%$.  On the other hand, with a mere 100 samples, the receiver can accurately infer the property distribution of the dataset $\boldsymbol{\mathcal{D}}$.
In addition, we discuss the interconnections between the two attacks and conduct a comparative analysis of attack results on diffusion models, GANs, and VAEs, which may provide valuable insights into the development of enhanced data-sharing protocol via pre-trained diffusion models.
%
%
Our contributions are summarized as follows:
\begin{itemize}[topsep=2mm,leftmargin=15pt]
  \item We investigate the potential privacy and fairness risks associated with the data-sharing scenarios via pre-trained diffusion models from an adversarial perspective. 
  To the best of our knowledge, this is the first study that explores the potential risks within such data-sharing scenarios. 
  \item We introduce a novel fairness poisoning attack on the receiver's downstream classifiers with a distinctive and stealthy poisoning objective, i.e., degrading the fairness of the classifier while maintaining its prediction accuracy. We formulate the attack as an optimization problem from an information-theoretic perspective and devise a greedy algorithm to achieve the poisoning objective.
  \item We propose a practical and accurate property inference attack capable of determining the property ratio of the sharer's private dataset. We further establish the attack error bounds and extend the attack target from binary properties to properties with multiple values.
  \item We perform experiments on various types of datasets and diffusion models to validate the attack effectiveness and generality. 
  We also explore the interconnections between our attacks and analyze potential countermeasures against them. 
  The study serves to provide valuable insights into the challenges and potential solutions for ensuring the safe and ethical sharing of diffusion models.
\end{itemize}

\section{Preliminaries}

\subsection{Diffusion Models}\label{subsec-pre-diffusion}
%
The training process of diffusion models comprises two phases: a forward process and a reverse process.
During the forward process, training images $\bx_0\sim p_{\text{data}}(\bx)$ are corrupted by progressively increasing levels of noise over $T$ timesteps, resulting in a sequence of images $\bx_0\rightarrow \cdots \rightarrow \bx_T$, where the final output $\bx_T$ adheres to a prior distribution, e.g., $\mathcal{N}(\boldsymbol{0}, \boldsymbol{I})$. 
In the reverse process, samples are generated by progressively removing noises from the random inputs $\bx_T\sim \mathcal{N}(\boldsymbol{0}, \boldsymbol{I})$, using a neural network $\boldsymbol{s}_{\boldsymbol{\theta}}(\bx_t, t)$ ($t\in \{T,\cdots, 1\}$). 
%
%
This paper focuses on three widely recognized diffusion models.

\vspace{0.5mm}
\noindent
\textbf{Noise Conditional Score Networks (NCSN)}. NCSN~\cite{NCSN, NCSNv2} first employs pre-specified noises $q(\bx_t | \bx_0)$ to perturb the original data $\bx_0$ during the forward process and then  utilizes a neural network $\boldsymbol{s}_{\boldsymbol{\theta}}(\bx_t, t)$ to estimate the score of the noise distribution $\nabla_{\bx_t}\log q(\bx_t | \bx_0)$ in the reverse process. 
%
%
Accordingly, the sampling process can be performed by utilizing annealed Langevin dynamics: $\bx_{t-1}\leftarrow \bx_t + \frac{\alpha_t}{2} \boldsymbol{s}_{\boldsymbol{\theta}}(\bx_t, t) + \sqrt{\alpha}_t \boldsymbol{z}_t$, where $\alpha_t$ denotes the step size at timestep $t$, and $\boldsymbol{z}_t \sim \mathcal{N}(\boldsymbol{0}, \boldsymbol{I})$.

\vspace{0.5mm}
\noindent
\textbf{Denoising Diffusion Probabilistic Models (DDPM)}. DDPM~\cite{DDPM} models the forward process as a discrete Markov chain such that $p(\bx_t | \bx_{t-1})= \mathcal{N}(\bx_t;\sqrt{1-\beta_t}\bx_{t-1},\beta_t\boldsymbol{I})$, with $\beta_t$ representing  the noise scale at timestep $t$. The reverse process is parameterized as a variational Markov chain with $p_{\boldsymbol{\theta}}(\bx_{t-1} | \bx_{t})= \mathcal{N}(\bx_{t-1};\frac{1}{\sqrt{1-\beta_t}}(\bx_{t}+\beta_t   \boldsymbol{s}_{\boldsymbol{\theta}}(\bx_t, t) ),\beta_t\boldsymbol{I})$.
By training a network $\boldsymbol{s}_{\boldsymbol{\theta}}(\bx_t,$ $t)$ to estimate the score of data density, DDPM samples new images by $\bx_{t-1} \leftarrow \frac{1}{\sqrt{1-\beta_t}}(\bx_t+\beta_t \boldsymbol{s}_{\boldsymbol{\theta}}(\bx_t, t)) + \sqrt{\beta_t}\boldsymbol{z}_t$, where $\boldsymbol{z}_t \sim \mathcal{N}(\boldsymbol{0}, \boldsymbol{I})$.

\vspace{0.5mm}
\noindent
\textbf{Stochastic Differential Equation-based Models (SDEM)}. SDEM~\cite{SDE} presents a general framework that unifies the two denoising score matching based models, DDPM and NCSN. The main difference between SDEM and the other two models is that SDEM perturbs $\bx_0$ via a prescribed stochastic differential equation (SDE), allowing continuous timesteps $t\in [0, T]$ and more flexible data manipulation. 



\subsection{Related Attack Algorithms}\label{subsec-pre-attack}

\noindent
\textbf{Fairness Poisoning Attacks (FPA)}. Model fairness requires that the predictions made by a deployed model $f$ should not vary significantly based on the protected features of individuals, such as gender and race~\cite{fairnessDP, fairnessEO}. This is particularly important in high-stakes domains, such as criminal justice~\cite{GhassamiKK18}, employment~\cite{fairnessEO}, and finance~\cite{fairnessDP}.
However, existing fairness poisoning attacks~\cite{pf-chang,pf-dasfaa,pf-ecml, pf-labelflip} aim to undermine both the fairness and accuracy of the model $f$ by poisoning $\alpha$ fraction of the training data using an auxiliary dataset $\mathcal{D}_{\text{aux}}$. Also, they rely on the white-box access $f_{\text{wb}}$ to the training process of the target model $f$ for obtaining a biased model $\overline{f}_{\text{bias}}$ with poor prediction performance and fairness: $\overline{f}_{\text{bias}} = \mathcal{A}_{\text{FPA}} (f_{\text{wb}}, \mathcal{D}_{\text{aux}})$.

\vspace{0.5mm}
\noindent
\textbf{Property Inference Attacks (PIA)}. Property inference attacks aim to deduce the collective characteristics of specific properties in the training dataset $\mathcal{D}$~\cite{propertyFL, propertyGAN, propertyMembership, propertyPermutation, propertyPoison}. 
It is worth noting that the global properties of the training dataset are regarded as confidential information in certain  contexts, such as the properties of software execution traces when training a malware detector~\cite{propertyPermutation}, and the aggregate sentiments of company emails when training a spam classifier~\cite{propertyPoison}.
Most current studies~\cite{propertyFL, propertyMembership, propertyPermutation, propertyPoison} rely on white-box access to pre-trained models $f_{\text{wb}}$ and an auxiliary dataset $\mathcal{D}_{\text{aux}}$ to determine the presence of a specific property $s$, i.e, $\vmathbb{1} (r_s > \epsilon_{\text{PIA}}) = \mathcal{A}_{\text{PIA}} (f_{\text{wb}}, \mathcal{D}_{\text{aux}})$, where $\vmathbb{1}$ is the indicator function, $r_s$ represents the proportion of samples with property $s$ in $\mathcal{D}$, and $\epsilon_{\text{PIA}}$ denotes a pre-determined threshold.

\vspace{0.5mm}
\noindent
\textbf{Attacks on Diffusion Models}.
Recent studies~\cite{diffusionMembership1, diffusionMembership2, diffusionFeature} have introduced several attacks on diffusion models.
\cite{diffusionMembership2} suggests inferring the membership of target images by comparing their generative losses or likelihood values generated by the pre-trained models against a pre-specified threshold.
%
Similarly, \cite{diffusionMembership1} employs the loss threshold method to determine membership, and additionally proposes using a neural network to make these determinations with the target image losses as input. 
In addition, \cite{diffusionFeature} evaluates the memorization behavior of diffusion models, and applies the likelihood ratio attack~\cite{memberLiRA} for membership inferences from diffusion models. 
%
%
However, all of these attacks require white-box access to the diffusion models, rendering them infeasible in the data-sharing scenario.
In particular, we focus on devising practical inference attacks that operate in a black-box setting without the need for auxiliary datasets. Compared with \cite{diffusionMembership1, diffusionMembership2, diffusionFeature}, our lightweight attacks are not only easier to conduct, but also pose a greater threat in real-world applications.


\section{Problem Statement}\label{sec-problem}
\noindent
\textbf{System Model.}
In this paper, we consider a system model in which a commercial institution (the model sharer) possesses a dataset $\mathcal{D}$ and intends to share private data with a third-party (the model receiver) by first training a diffusion model $f_{\text{d}}$ based on the private data and then granting black-box access to the receiver, motivated by either commercial interests~\cite{acemoglu2022too} or research collaboration~\cite{ayyoubzadeh2022clinical}. 
Upon being granted the black-box access to $f_{\text{d}}$, the receiver can proceed to sample a dataset $\hat{\mathcal{D}}$ to train the downstream model $f_{\text{c}}$.
Let $\mathcal{D}=\{(x_i,s_i,y_i):i\in \{1,\cdots,m\}\}$ be the sharer's dataset for training $f_{\text{d}}$, where $(x_i,s_i)$ denotes the record features and $y_i$ denotes the class label. Here $(x_i,s_i,y_i)$ is an evaluation of a random tuple $(X,S,Y)$ supported by $\mathcal{X}\times\mathcal{S}\times\mathcal{Y}$ obeying a joint distribution $P_{X,S,Y}$. In what follows, we consider the problem from an empirical perspective, i.e., the joint distribution we consider is empirically obtained by the occurrences of the evaluations in the dataset.
Note that the sensitive feature $S$ can refer to either numerical features in tabular datasets, such as a person's income, or semantic features in image datasets, such as the male property of a human face.
To align with previous work~\cite{propertyGAN, changbias}, we consider $(X,S)$ as a unified input for diffusion models.
%

\vspace{0.5mm}
\noindent
\textbf{Attack Model.}
The \textit{fairness poisoning attack} is performed by the model sharer. 
%
%
The existing approach of fairness poisoning~\cite{pf-chang,pf-dasfaa,pf-ecml, pf-labelflip} seeks to degrade $f_{\text{c}}$ via the white-box access to model structures and training algorithms, which, however, is not applicable in the data-sharing setting~\cite{sharingHistopathology, zeroshotdiversity}, where the adversary (sharer) has no access to the receiver's information, including the structure of $f_{\text{c}}$, the details of training algorithms, and the auxiliary data held by the receiver. The only information known by the sharer is the knowledge that the receiver intends to train  $f_{\text{c}}$ on the synthetic data $\hat{\mathcal{D}}$.
Consequently, the sharer's capacity to influence $f_{\text{c}}$ is limited to  manipulating the distribution of his training dataset ${\mathcal{D}}$ of the diffusion model, i.e., $f_{\text{c,bias}} = \mathcal{A}_{\text{FPA}} ({\mathcal{D}})$.
Note that different from $\overline{f}_{\text{bias}}$ in Section~\ref{subsec-pre-attack}, $f_{\text{c,bias}}$ has biased predictions but similar testing accuracy as that of the model $f_{\text{c}}$ trained on clean data. This change in the poisoning objective is for attack stealthiness, as a model with low accuracy can be easily detected, while a biased model with good accuracy may go unnoticed for a long time and is more problematic~\cite{fp-accuracy}. 

For the\textit{ property inference attack } performed by the model receiver, we consider the semi-honest model~\cite{luo2021feature}, where the receiver honestly follows the data-sharing protocol but tries to infer the sharer's private information via $\hat{\mathcal{D}}$. We propose two adversaries for this attack. The first adversary, following the assumption in previous studies~\cite{propertyFL,propertyGAN,propertyMembership,propertyPermutation,propertyPoison}, can collect an auxiliary dataset $\mathcal{D}_\text{aux}$ for reconstructing the proportion $r_s$ of the target property $s$ in the sharer's private dataset $\mathcal{D}$, i.e., $r_s  = \mathcal{A}^{(1)}_{\text{PIA}} (\hat{\mathcal{D}}, \mathcal{D}_{\text{aux}})$.
Note that $\hat{\mathcal{D}}$ and $\mathcal{D}_{\text{aux}}$ are not required to conform the same distribution.
The second adversary follows a more restricted and practical setting in which the auxiliary data is unavailable, and the only available information for achieving the attack objective is the synthetic data $\hat{\mathcal{D}}$, i.e., $r_s  = \mathcal{A}^{(2)}_{\text{PIA}} (\hat{\mathcal{D}})$.
It is important to note that different from previous property inference studies~\cite{propertyFL,propertyMembership,propertyPermutation,propertyPoison} which aim to infer the existence of the target property $s$, we focus on inferring its exact proportion $r_s$, which is more difficult and harmful in real-world applications.

\section{The Attacks Performed by the Sharer}

\begin{figure*}[!ht]
\centering
\begin{tabular}{c}
\subfloat[Fairness Poisoning Attack]{\includegraphics[width=0.55\textwidth]{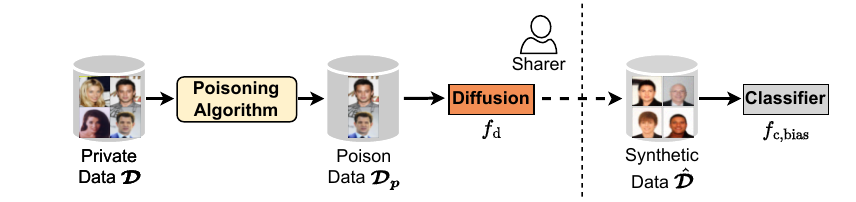}\label{fig-fpa}}
\hspace{9mm}
\subfloat[Property Inference Attack]{\includegraphics[width=0.38\textwidth]{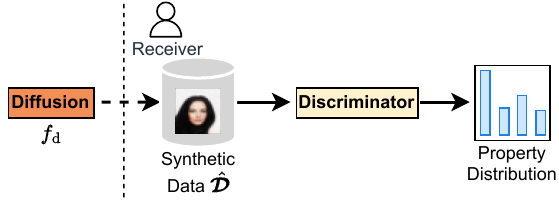}\label{fig-pia}}
\end{tabular}
\caption{Overview of the proposed attacks.}
\label{fig-overview}
\end{figure*}

Ensuring model fairness is critical in trustworthy machine learning systems, as it prevents systemic discrimination against protected groups of individuals~\cite{pf-chang,pf-labelflip}. Despite its significance, the fairness risks associated with sharing diffusion models are underexplored.
Existing studies on fairness poisoning~\cite{pf-chang, pf-dasfaa, pf-ecml, pf-labelflip, fp-accuracy} typically assume that the adversary has access to the classifier training dataset and information about the model structures, hyper-parameters, and constraints.
However, in data-sharing scenarios via pretrained models~\cite{healthRecordSharing, sharingHistopathology, TabDDPM}, the sharer cannot obtain any information from the model receiver, rendering existing methods impractical. As a result, new attack methods are needed to enable the sharer to effectively poison the downstream classifier.

Before introducing the proposed fairness poisoning attack, we need to make some clarifications about the data-sharing scenario via pre-trained models~\cite{zeroshotdiversity, zeroshoticlr, fewshotStableDiffusion}.
First, considering that the data-sharing pipeline is utilized in data-scarce settings, zero-shot learning~\cite{zeroshotdiversity} or few-shot learning~\cite{fewshotStableDiffusion} is typically employed for training the downstream classifier based on $\dhat$~\cite{zeroshoticlr, sharingMedicalImages}.
Second, a clean dataset $\dtest$ should be available to the receiver for validating classifier accuracy~\cite{pf-chang,pf-labelflip}. 
In addition, we adopt the conventional framework in fairness studies~\cite{fairnessDP, fairnessEO} where the protected feature $S$ is discrete.


\subsection{Fairness Poisoning}
We now introduce the proposed FPA conducted by the sharer.
As the sharer has no access to the training algorithm performed by the receiver, the only way to poison the receiver's classifier $f_{\text{c}}$ is by changing the distribution of its training dataset $\dhat$. Unfortunately, $\dhat$ is also inaccessible to the sharer. However, an important characteristic of diffusion models is their excellent distribution coverage~\footnote{
A straightforward method for validating distribution coverage is to assess whether the proportion of a sensitive feature in the training data $\dbig$ is roughly equal to the feature proportion in the synthetic data $\dhat$ within a small error margin (e.g., 0.05). In Appendix~\ref{appen-exp-underfitting}, our experiments performed across different types of diffusion models provide a positive answer to this criterion.
Appendix~\ref{appen-exp-underfitting} also reveals the presence of distribution coverage even in underfitted diffusion models.
}, as discussed in Section~\ref{sec-intro}. 
Therefore, the sharer can modify the distribution of $\dhat$ by altering the training dataset $\dbig$ of the diffusion model. 
Fig.~\ref{fig-fpa} shows the workflow of the proposed fairness poisoning attack. Specifically, the sharer can first sample a biased poisoning dataset $\dpoi\subseteq \dbig$ and then feed $\dpoi$ to the diffusion model $f_{\text{d}}$. The bias of $\dpoi$ can be transmitted to the receiver's classifier along the workflow shown in Fig.~\ref{fig-share-egg}. 
In the following context, we define the random tuples $(X,S,Y)$, $(X_p,S_p,Y_p)$ with joint distributions specified by the appearance ratio of evaluations in $\dbig$, $\dpoi$, respectively.

\vspace{0.5mm}
\noindent
\textbf{Fairness Notion}. To proceed with the design of the poisoning attack, we first define fairness.
We note that fairness has different interpretations in the literature~\cite{changbias}. In this paper, we concentrate on a commonly used quantitative measure of fairness, demographic parity~\cite{fairnessDP, changbias, pf-chang, MIInfoFair}. Demographic parity requires that the model's predictions $f(X)$ are statistically independent of the protected features $S$:
\begin{equation}\label{eq-dp}
    \text{Pr}(f(X)=y|S=s)=\text{Pr}(f(X)=y)
\end{equation}
for any $y\in\mathcal{Y}$ and $s\in \mathcal{S}$.

\vspace{0.5mm}
\noindent
\textbf{Poisoning Objective}. Previous studies on fairness poisoning~\cite{pf-chang, pf-dasfaa, pf-ecml, pf-labelflip} aimed to corrupt both the accuracy and fairness of the target model.
However, this goal is relatively impractical, as models with a low accuracy may not be suitable for deployment in production, making it pointless to try to corrupt fairness. Conversely, high-accuracy models are more likely to be put into production, and their biases may affect protected individuals for extended periods, as discussed in~\cite{fp-accuracy}.
Therefore, for attack stealthiness, we opt for the objective that degrades only the fairness of the target model. Moreover, we aim to ensure that the test accuracy of $\fbias$ trained on $\dhat\sim \prob_{\dpoi}$ is comparable to that of $f_{\text{c}}$ trained on clean data $\dhat\sim \prob_{\dbig}$. However, achieving this objective can be challenging when the adversary cannot evaluate the losses of the target model $f_{\text{c}}$.
To account for the sharer's limited control, we reframe the poisoning objective as an optimization problem from an information-theoretic perspective.

\vspace{0.5mm}
\noindent
\textbf{Optimization Objective}. 
%
Note that the core of the proposed attack is to sample $\dpoi$ in a way that achieves the desired poisoning objective.
We now introduce the concept of mutual information and formulate the poisoning objective as an optimization problem.
Mutual information (MI)~\cite{MIbook} is a widely used measure of shared randomness between (sets of) random variables. Let $(X,Y)$ be a pair of random variables with marginal distributions $\prob_X$ and $\prob_Y$, and joint distribution $\prob_{X,Y}$, the mutual information between $X$ and $Y$ are defined as $ I(X; Y) = \int dX \int dY \prob_{X, Y}\left[\log \frac{\prob_{X, Y}}{\prob_X \prob_Y} \right]$.
A larger $I(X;Y)$ indicates a stronger correlation between $X$ and $Y$.
Note that MI captures both linear and non-linear dependence between two random variables, making it a more suitable measure than Pearson's correlation coefficient for our problem~\cite{MIInfoFair}.

The equation \eqref{eq-dp} indicates that to achieve demographic parity, the sensitive feature $S$ and the model prediction $f(X)$ must be independent, i.e., $I(S;f(X))=0$. Thus, compromising the fairness of $f_{\text{c}}$ implies maximizing $I(S;f_{\text{c}}(X))$. However, as the sharer cannot access $f_{\text{c}}$, the attention shifts to maximizing $I(S_p;Y_p)$ instead, because the training labels $Y_p$ and model predictions $f_{\text{c}}(X)$ are generally consistent. 
We now consider the accuracy constraint. Instead of directly restricting the test accuracy of $f_{\text{c}}$ trained on $\dhat\sim \prob_{\dpoi}$, we propose preserving the data utility of $\dpoi$ compared to $\dbig$. To achieve this, we impose a mutual information restriction on $\dpoi$ such that 
\begin{equation}\label{eq-mutual-info-constraint}
    |I(X,S;Y)-I(X_p,S_p;Y_p)|\leq \xi
\end{equation}
for small $\xi>0$, where the mutual information $I(X,S;Y)$ and $I(X_p,S_p;$ $Y_p)$ measure the data utility of $\dbig$ and $\dpoi$, respectively.
By enforcing constraint (\ref{eq-mutual-info-constraint}), we can minimize the potential training accuracy degradation of $f_{\text{c}}$. Moreover, enforcing (\ref{eq-mutual-info-constraint}) can also help reduce the likelihood of introducing additional correlations between certain features in $X$ and $Y$ during the sampling process, which in turn minimizes the potential risks of overfitting during the training of $f_{\text{c}}$.
In summary, we can formulate the poisoning objective as an optimization problem that samples a dataset $\dpoi\subseteq \dbig$ such that:
\begin{equation}\label{eq-opt}
\begin{aligned}
\textrm{maximize}_{\dpoi\subseteq \dbig} \quad & I(S_p; Y_p)\\
\textrm{subject to} \quad & I(X,S;Y)-I(X_p,S_p;Y_p) \leq \xi, \\
  &    I(X_p,S_p;Y_p) - I(X,S;Y) \leq \xi.  \\
\end{aligned}
\end{equation}

Note that for image datasets, we can first use a pre-trained vision model $\sigma$ to extract semantic information from input images. Subsequently, we can compute the mutual information by using $I(\sigma(X,S);Y)$.
%
Although this way may lead to a reduction of the mutual information between images and labels, i.e., $I(\sigma(X,S);Y)\leq I(X,S;Y)$, due to the classic data processing inequality, it can still greatly reduce the computational complexity of mutual information estimation and enable more general sampling on $\dbig$.
Given that $I(\sigma(X,S);Y)$ or $I(X,S;Y)$ can be pre-computed on $\dbig$. Denoting $c = I(\sigma(X,S);Y)$, the optimization objective on image datasets can be simplified as follows:
\begin{equation}\label{eq-opt-sim}
\begin{aligned}
\textrm{maximize}_{\dpoi\subseteq \dbig} \quad & I(S_p; Y_p)\\
\textrm{subject to} \quad & -I(\sigma(X_p,S_p);Y_p) \leq \xi -c, \\
  &    I(\sigma(X_p,S_p);Y_p)  \leq \xi + c.  \\
\end{aligned}
\end{equation}

\subsection{The Sampling Algorithm}
Let $\alpha$ denote the fraction of the training dataset that is modified by the adversary, with $0\leq \alpha\leq 1$.
%
To satisfy the optimization objective in Eq.~\eqref{eq-opt}, we propose a greedy algorithm for sampling a dataset $\dpoi\subseteq\dbig$. Algorithm~\ref{alg-FPA-greedy} starts by randomly selecting $(1-\alpha)|\dpoi|$ records $(X_p,S_p,Y_p)$ from $\dbig$ as the clean base data of $\dpoi$, which satisfies the data utility constraint outlined in Eq.~\eqref{eq-opt} (lines \ref{alg-base-start} - \ref{alg-base-end}). 
The algorithm then iteratively selects $\alpha \cdot |\dpoi|$ samples from the remaining portion of $\dbig$ to poison in a way that maximizes the mutual information between $S_p$ and $Y_p$, while preserving the data utility of $\dpoi$ compared to $\dbig$ (lines \ref{alg-poi-start} - \ref{alg-poi-end}). 

During sampling the poisoning records, we first select the 
most possible combinations $(s=j,y=k)$ of sensitive feature $j$ and target label $k$ that can maximize the increase of mutual information between $S_p\cup j$ and $Y_p\cup k$ (lines \ref{alg-maxSY-start} - \ref{alg-maxSY-end}).
Then, we search for a sample with $(x, s=j,y=k)$ that, when added to $\dpoi$, minimizes the distance of MI from $I(X_p\cup x, S_p\cup s; Y_p\cup y)$ to $I(X,S;Y)$, as outlined in  lines \ref{alg-utity-start} - \ref{alg-utity-end}.
If the distance falls within the constraint outlined in Eq.~\eqref{eq-opt}, we add the sample to $\dpoi$ and proceed to search for the next poisoning sample (lines \ref{alg-utity-success-start} - \ref{alg-utity-success-end}).
If not, we continue searching for valid samples in the next combination $(s=j,y=k)$ that maximizes $I(S_p\cup j; Y_p\cup k)$ (line \ref{alg-utity-fail}). After finding all $\alpha\cdot|\dpoi|$ poisoning samples, we check the data utility constraint on $\dpoi$ and return the valid dataset (lines \ref{alg-lastcheck-start} - \ref{alg-lastcheck-end}).

\begin{algorithm}[!h]
\begin{small}
\caption{The Greedy Sampling Algorithm}
\label{alg-FPA-greedy}
\KwInput{The underlying dataset $\dbig=(X,S,Y)$, the threshold $\xi$ in data utility constraint, the poisoning ratio $\alpha$, the size $m_p$ of $\dpoi$, the function $\phi$ for computing MI}
\KwOutput{The poisoning dataset $\dpoi=(X_p,S_p,Y_p)$}
$c\gets \phi (X,S;Y)$ \;
$\mathcal{S}\gets $ domain of $S$ \; 
$\mathcal{Y}\gets $ domain of $Y$ \;
$\dpoi\gets \emptyset$\;
\For(\tcp*[f]{Sample the clean base data}){$i=1, 2, \cdots,|\dbig|$}{ \label{alg-base-start}
    $(X_p,S_p,Y_p) \gets $ randomly sampling $[m_p*(1-\alpha)]$ records from $\dbig$ \;
    \If{$|\phi(X_p,S_p;Y_p)-c|< \xi$}{ \label{alg-base-constr}
        $\dpoi\gets (X_p,S_p,Y_p)$\;
        Break \;  \label{alg-base-end}
    }
}
\If{$\dpoi$ is $\emptyset$}{
    \Return $\bot$ \;
}
$\overline{\dbig}\gets \dbig - \dpoi$ \tcp*{The sampling pool}
$\{\overline{\dbig}_{s=j,y=k}, (j,k)\in \mathcal{S}\times \mathcal{Y} \} \gets$ split $\overline{\dbig}$ based on $\mathcal{S}\times \mathcal{Y}$\; \label{alg-re-sample-start}
\For(\tcp*[f]{Sample the poisoning data}){$i=1,2,\dots, (m_p*\alpha)$}{  \label{alg-poi-start}
    $\mathrm{SY}\gets \emptyset$  \tcp*{Maximize $I(S;Y)$} \label{alg-maxSY-start}
    \For{$j \in \mathcal{S}$}{  
        \For{$k \in \mathcal{Y}$}{ 
            $\mathrm{info}\gets \phi(S_p\cup j; Y_p\cup k)$\; 
            $\mathrm{SY} \gets \mathrm{SY}\cup (\mathrm{info}, (j,k))$ \; \label{alg-maxSY-search-end}
        }
    }
    Sort $\mathrm{SY}$ on $\mathrm{info}$ in a descending order\;  \label{alg-maxSY-end}
    \For(\tcp*[f]{Impose the data utility constraint}){$(\mathrm{info}, (j,k)) \in \mathrm{SY}$}{  
        $\mathrm{candidate} \gets \emptyset$ \;   \label{alg-utity-start}
        \For{$(x,s,y) \in \overline{\dbig}_{s=j,y=k}$}{   \label{alg-inner-loop}
            $\mathrm{info}\gets \phi(X_p\cup x, S_p\cup s; Y_p\cup y)$\;   \label{alg-inner-loop-estimator}
            $\mathrm{dist}\gets |\mathrm{info} - c|$ \; 
            $\mathrm{candidate}\gets \mathrm{candidate} \cup (\mathrm{dist}, (x, s, y))$ \; 
        }
        Find the $(x,s,y)$ with the minimum $\mathrm{dist}_{\mathrm{min}}$ in $\mathrm{candidate}$\;  \label{alg-utity-end}
        \If{$\mathrm{dist}_{\mathrm{min}} < \xi$}{  \label{alg-utity-success-start}
            $\dpoi\gets\dpoi \cup (x,s,y)$\;
            $ \overline{\dbig}_{s=j,y=k} \gets  \overline{\dbig}_{s=j,y=k} - (x,s,y)$ \;
            Break and search for the next poisoning sample\; \label{alg-utity-success-end}
        }\Else{
            Continue searching the next $(\mathrm{info}, (j,k))$\;  \label{alg-utity-fail}
        }
    }
    \If{Failed to find a valid $(x,s,y)$ for $\dpoi$}{  
        $(x,s,y)\gets$ randomly sampling a record in $\overline{\dbig}$\;
        $\dpoi\gets\dpoi \cup (x,s,y)$\;
        $ \overline{\dbig} \gets  \overline{\dbig} - (x,s,y)$ \; \label{alg-poi-end}
    }
}
\If{$|\phi(X_p,S_p;Y_p)-c|>\xi$}{  \label{alg-lastcheck-start}
    \Return $\bot$ \; 
}\Else{
    \Return  $\dpoi$\;   \label{alg-lastcheck-end}
}
\end{small}
\end{algorithm}

\vspace{0.5mm}
\noindent
\textbf{Time Complexity}. 
In order to analyze the time complexity of Algorithm~\ref{alg-FPA-greedy}, we need to select a mutual information estimator $\phi$ for computing $I(X_p,S_p;Y_p)$.  
Let $m=|\dbig|$ and $m_p=|\dpoi|$. 
Compared to the traditional estimators, such as adaptive binning~\cite{mi-bin} with $O(m_p\log m_p)$ complexity and kNN~\cite{mi-knn} with $O(m_p^2)$ complexity, recently developed neural network estimators, such as MINE~\cite{MINE}, are more suitable for Algorithm~\ref{alg-FPA-greedy} as they achieve $O(m_p)$ complexity through a pre-trained network, and provide more accurate estimations on high-dimensional data~\cite{MINE}.
With a mutual information estimator of $O(m_p)$ complexity, we now analyze the time complexity of Algorithm~\ref{alg-FPA-greedy}. 
During the phase of sampling clean base data (lines \ref{alg-base-start} - \ref{alg-base-end}), we may perform the sampling once or $m$ times, depending on the data utility constraint (line \ref{alg-base-constr}).
When selecting the combination $(s=j,y=k)$ that maximizes $I(S_p;Y_p)$ (lines \ref{alg-maxSY-start} - \ref{alg-maxSY-search-end}), we perform a constant number ($|\mathcal{S}|*|\mathcal{Y}|$) of searches. 
In fairness-related studies~\cite{fairnessDP, fairnessEO}, the protected feature $S$ is typically binary, so $|\mathcal{S}|*|\mathcal{Y}| \ll m_p$ is expected.
After determining the desired $(s=j,y=k)$, we may perform $O(| \overline{\dbig}_{s=j,y=k}|)$ searches (line \ref{alg-utity-success-end}) or $O(|\overline{\dbig}|)$ searches (line \ref{alg-utity-fail}), depending whether the selected sample satisfies the utility constraint (line \ref{alg-utity-success-start}).
In summary, the time complexity of Algorithm~\ref{alg-FPA-greedy} is $O(m_p\cdot m\cdot m_p)$, where the first $m_p$ denotes the outer loop (line \ref{alg-poi-start}), $m$ denotes the inner loop (line \ref{alg-inner-loop}), and the second $m_p$ denotes the cost of the mutual information estimator (line \ref{alg-inner-loop-estimator}).
Furthermore, by replacing the search pool of $\overline{\dbig}_{s=j,y=k}$ (line \ref{alg-inner-loop}) with a constant number (e.g., 100) of samples randomly selected from $\overline{\dbig}_{s=j,y=k}$, we can further reduce the complexity from  $O(m_p\cdot m\cdot m_p)$ to  $O(m_p\cdot m_p)$. This approach enables the attacker to identify a suboptimal solution to Eq.~\eqref{eq-opt} while significantly improving the algorithm execution speed.


\vspace{0.5mm}
\noindent
\textbf{Remarks on FPA}. 
The strict black-box setting in data-sharing scenarios~\cite{healthRecordSharing, sharingMedicalImages} restricts the sharer's ability to generalize the proposed fairness poisoning attacks beyond demographic parity to other fairness criteria including equalized odds and equal opportunity~\cite{fairnessEO}. 
These two notions require knowledge of the joint distribution of the true labels $Y$, the model predictions $f_{\text{c}}(X)$, and the sensitive feature $S$, which cannot be implemented from the sharer's side because the sharer has no access to the receiver's knowledge, particularly $f_{\text{c}}(X)$.
However, in a white-box setting, where the sharer can actively participate in training the receiver's classifier~\cite{fp-accuracy, pf-chang}, it may be feasible to design attacks based on these two fairness notions.
Nevertheless, the primary focus of this paper is on the real-world black-box setting. The exploration of attacks within the white-box setting is reserved for our future work.

\section{The Attacks Performed by the Receiver}
In this section, we introduce the property inference attack performed by the receiver within the data-sharing pipeline.
The proposed PIA enables the receiver to estimate the proportions of specific target properties in the sharer's private dataset $\mathcal{D}$, potentially leading to serious consequences for the sharer in various applications.
For instance, consider a scenario where the sharer shares synthetic software execution traces with the receiver to facilitate the development of a malware detector~\cite{propertyPermutation}. By analyzing the ratios of target properties within the execution traces of benign software, the receiver can strategically design malware that can circumvent the sharer's detection mechanism.
For ease of presentation, we initially assume that the sensitive property $S$ in $\mathcal{D}$ is defined on binary support $\mathcal{S}$, i.e., $\mathcal{S}=\{0, 1\}$, and the receiver's goal is to infer fine-grained property information, specifically, the property ratios within a specific class (e.g., $y=0$).
With this assumption, we present two PIA adversaries for inferring the proportion $r_s$ of the target property $s$ (i.e., $S=1$) in $\mathcal{D}_{y=0}$. 
Subsequently, we relax the initial assumption and discuss the attack generalization to properties with non-binary support, as well as the estimation of coarse-grained property information, i.e., the overall property ratios across all classes.

\subsection{Property Inference Attack}\label{subsec-PIA-details}
The overview of the proposed PIA is depicted in Fig.~\ref{fig-pia}. 
Instead of analyzing the white-box model parameters for attack implementation~\cite{propertyFL,propertyMembership,propertyPermutation}, we observe the robust distribution coverage of diffusion models where the data distribution derived from $\hat{\mathcal{D}}$ is similar to that from $\mathcal{D}$.
Based on this insight, we propose to estimate the real $r_s$ in $\mathcal{D}_{y=0}$ by analyzing the proportion $\hat{r}_{s}$ in $\hat{\mathcal{D}}_{y=0}$. 
Thus, the challenge pertains to determining the presence of property $s$ in different synthetic samples. To address this challenge, we opt for a simple but effective solution, i.e., employ a property discriminator $g_{\text{d}}$.

%

\noindent
\textbf{The First Adversary $\mathcal{A}^{(1)}_{\text{PIA}}$}. 
If the receiver can collect an auxiliary dataset $\mathcal{D}_{\text{aux}}$ with binary labels indicating the existence or non-existence of the property $s$, a binary property discriminator $g_{\text{d}}$ can be trained effectively. 
Using $g_{\text{d}}$ to determine the presence of $s$ in each sampled image from $\hat{\mathcal{D}}_{y=0}$, we can obtain the distribution histogram of $S$ effectively. The detailed computation steps for $r_s$ are provided in Algorithm~\ref{alg-attack-PIA}.

\noindent
\textbf{The Second Adversary $\mathcal{A}^{(2)}_{\text{PIA}}$}.
When the auxiliary dataset is unavailable, manual labeling of $\dhat_{y=0}$ may be a viable option for property inference, which, however, can be costly and time-consuming~\cite{survey-SSLUL}. 
We have conducted extensive explorations to identify a suitable discriminator for various properties, with the pre-trained semantic model, CLIP~\cite{CLIP}, emerging as a promising candidate because of its multi-modal embedding space for both text and image as well as its excessive training visual concepts. One of the key advantages of CLIP is its ability to generalize effectively. 
Given its training with large-scale natural language supervision~\cite{CLIP}, it can be directly applied for discerning different properties with distinct semantic meanings without requiring additional fine-tuning efforts. In Fig.~\ref{fig-clip}, for instance, 
we can achieve zero-shot classification~\cite{zeroshotdiversity,zeroshoticlr,CLIP} of various properties by replacing the objects in the prompts.
In some cases, CLIP can achieve even better classification accuracies than discriminators trained on auxiliary datasets. For example, when discerning teenagers aged 18-20 from middle-aged individuals aged 30-39 on the AFAD dataset~\cite{AFAD}, CLIP achieves a 91\% accuracy, while the discriminator $g_{\text{d}}$ trained on AFAD attains only an 80\% accuracy.
Therefore, in situations where auxiliary datasets are unavailable, the adversary can utilize the pre-trained CLIP model as the discriminator $g_{\text{d}}$ for property inference.

\begin{algorithm}[!t]
\begin{small}
\caption{The Property Inference Attack}
\label{alg-attack-PIA}
\KwInput{the discriminator $g_{\text{d}}$, the sampled dataset $\dhat=\{\bx_i\}_{i=1}^{\hat{m}}$}
\KwOutput{The estimated property ratio $\hat{r}_s$}
$n_s\gets 0$ \tcp*{Initialize the Counter of $s$'s Presence}
\For{$i=1,2,\cdots,\hat{m}$}{  
    confidence $\gets g_{\text{d}}(\bx_i)$ \;
    $n_s \gets n_s + \vmathbb{1}(\text{confidence}>0.5)$ \tcp*{Update the Counter}
}
$\hat{r}_s\gets \frac{n_s}{\hat{m}}$ \;
\Return $\hat{r}_s$\;
\end{small}
\end{algorithm}

\begin{figure}[t]
\centering
\includegraphics[width=0.8\columnwidth]{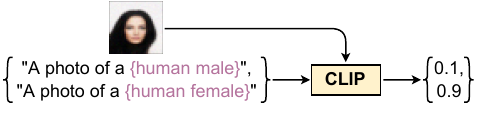}
\caption{Use the CLIP model as a property discriminator. }
\label{fig-clip}
\end{figure}

\subsection{Bounding the Estimation Error}\label{subsec-error-bound}
In Algorithm~\ref{alg-attack-PIA}, the receiver takes $\hat{m}$ samples from the diffusion model for estimating the property proportion of $s$.
By employing Hoeffding's inequality~\cite{hoeffding}, we can bound the estimation error as follows.

\begin{theorem}\label{thm-error-bound}
Let $g_{\text{d}}: \mathcal{X}\rightarrow \mathcal{S}$ used in Algorithm~\ref{alg-attack-PIA} be an unbiased discriminator\footnote{
\noindent Note that an unbiased discriminator can be trained using a variety of techniques, such as meta-learning~\cite{debial-metalearning}, conditional adversarial debiasing~\cite{debias-adversarial}, and re-weighted objective~\cite{debias-reweight}. Furthermore, in Section~\ref{subsec-exp-PIA}, we will discuss the influence of a biased discriminator on the accuracy of PIA.
} 
with a prediction error of $\epsilon_d$.
Let ${r}^*_s$ represent the ground truth proportion of $s$ in the synthetic data, and $\hat{r}_s$ denote the estimated proportion of $s$ produced by Algorithm~\ref{alg-attack-PIA}. There exists a constant $\epsilon\geq 0$ such that
\begin{equation}\label{eq-hoeffding-final}
    \prob\left(\left|\hat{r}_s - r^*_s \right|\geq \epsilon+\epsilon_d\right) \leq 2 \exp\left(-2\hat{m}\epsilon^2\right).
\end{equation}
\end{theorem}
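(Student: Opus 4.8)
The plan is to decompose the error $|\hat r_s - r^*_s|$ into two pieces: a statistical sampling error, which I will control with Hoeffding's inequality, and a systematic discriminator error, which is controlled deterministically by the assumption that $g_{\text{d}}$ has prediction error $\epsilon_d$. First I would set up the random variables: let $\bx_1,\dots,\bx_{\hat m}$ be the i.i.d.\ samples drawn from the diffusion model, and for each $i$ let $Z_i = \vmathbb{1}(g_{\text{d}}(\bx_i) \text{ predicts } s)$, so that $\hat r_s = \frac{1}{\hat m}\sum_{i=1}^{\hat m} Z_i$ is exactly the quantity returned by Algorithm~\ref{alg-attack-PIA}. Each $Z_i$ is a Bernoulli variable bounded in $[0,1]$, and $\expt[\hat r_s] = \expt[Z_1] =: \mu$ is the probability that $g_{\text{d}}$ reports property $s$ on a fresh sample.

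Next I would relate $\mu$ to the ground-truth proportion $r^*_s$. Because $g_{\text{d}}$ is an unbiased discriminator with prediction error $\epsilon_d$, the probability it labels a sample as $s$ differs from the true fraction of $s$-samples by at most $\epsilon_d$; that is, $|\mu - r^*_s| \le \epsilon_d$. (Unbiasedness is what rules out the error accumulating systematically in one direction beyond $\epsilon_d$ — with a biased discriminator the false-positive and false-negative rates need not cancel, which is precisely the caveat flagged in the footnote and deferred to Section~\ref{subsec-exp-PIA}.) Then by the triangle inequality,
\begin{equation}
  \left|\hat r_s - r^*_s\right| \;\le\; \left|\hat r_s - \mu\right| \;+\; \left|\mu - r^*_s\right| \;\le\; \left|\hat r_s - \mu\right| + \epsilon_d .
\end{equation}
Hence the event $\{|\hat r_s - r^*_s| \ge \epsilon + \epsilon_d\}$ is contained in the event $\{|\hat r_s - \mu| \ge \epsilon\}$, so it suffices to bound the probability of the latter.

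Finally I would invoke Hoeffding's inequality for the bounded i.i.d.\ sum $\hat r_s = \frac{1}{\hat m}\sum_i Z_i$ with $Z_i \in [0,1]$: the two-sided form gives $\prob(|\hat r_s - \mu| \ge \epsilon) \le 2\exp(-2\hat m \epsilon^2)$. Combining this with the inclusion of events from the previous step yields $\prob(|\hat r_s - r^*_s| \ge \epsilon + \epsilon_d) \le \prob(|\hat r_s - \mu| \ge \epsilon) \le 2\exp(-2\hat m \epsilon^2)$, which is exactly \eqref{eq-hoeffding-final}, and $\epsilon \ge 0$ is the free constant in the statement.

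I expect the only real subtlety — the "hard part" in an otherwise routine argument — to be pinning down the precise sense in which "unbiased discriminator with prediction error $\epsilon_d$" forces $|\mu - r^*_s| \le \epsilon_d$; this needs a clean definition of $\epsilon_d$ (e.g.\ balanced error rate, or equal false-positive/false-negative rates summing appropriately) so that the discriminator's misclassifications do not bias the count beyond $\epsilon_d$. Everything else — the i.i.d.\ structure of the samples (which rests on the diffusion model's sampling being independent across draws) and the application of Hoeffding — is standard.
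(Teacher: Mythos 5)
Your proposal is correct, and it reaches \eqref{eq-hoeffding-final} by a slightly different decomposition than the paper's. The paper splits the error as $|\hat r_s-r^*_s|\le|\hat r_s-\hat r^*_s|+|\hat r^*_s-r^*_s|$, where $\hat r^*_s=\frac{1}{\hat m}\sum_i S_i$ is the \emph{true} empirical proportion in the drawn sample: Hoeffding is applied to the true indicators $S_i$ to control $|\hat r^*_s-r^*_s|$, and the discriminator error is absorbed via the pathwise bound $|\hat r_s-\hat r^*_s|\le\epsilon_d$ on the realized sample. You instead apply Hoeffding to the predicted indicators $Z_i$ themselves, concentrating $\hat r_s$ around its own mean $\mu=\expt[Z_1]$, and push the discriminator error into the population-level bias $|\mu-r^*_s|\le\epsilon_d$. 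Both routes give the same bound, but they lean on the hypothesis ``prediction error $\epsilon_d$'' in different ways: the paper's step requires that the \emph{realized} fraction of misclassifications in the sample never exceeds $\epsilon_d$ (a pathwise statement, strictly stronger than an expected error rate), whereas your step only needs $|\prob(g_{\text{d}}(\bx)=s)-\prob(S=s)|\le\prob(g_{\text{d}}(\bx)\neq S)\le\epsilon_d$, which is the more defensible reading of the assumption — so your version is arguably the cleaner one, and it applies concentration to the quantity Algorithm~\ref{alg-attack-PIA} actually computes. One further observation worth noting: under the paper's literal definition of unbiasedness, $\expt[\hat S]=\expt[S]$, your middle term $|\mu-r^*_s|$ is exactly zero, so your argument would in fact yield the stronger bound without the $+\epsilon_d$ slack; this reveals that the two hypotheses (unbiasedness and error $\epsilon_d$) play partially redundant roles in both proofs, and your closing remark that the whole difficulty lies in pinning down the precise meaning of ``unbiased with prediction error $\epsilon_d$'' is exactly right.
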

\begin{proof}
Please refer to Appendix~\ref{appendix-proof-thm}.
\end{proof}

Eq.~\eqref{eq-hoeffding-final} demonstrates that by sampling $\hat{m}=\frac{\log {(2/\delta)}}{2\epsilon^2}$ images for some $\delta>0$, we can reduce the probability of $\hat{r}_s$ deviating from $r^*_s$ by a value greater than $\epsilon + \epsilon_d$ to less than $\delta$.
For example, if the prediction accuracy of $g_{\text{d}}$ is $90\%$ (i.e., $\epsilon_d=0.1$), a sample size of $\hat{m}=150$ images would suffice to ensure that the probability of $|\hat{r}_s-r^*_s|\geq 0.2$ is less than 0.1.

It is important to note that Eq.~\eqref{eq-hoeffding-final} only provides an error bound between the estimated proportion $\hat{r}_s$ and the ground truth proportion $r^*_s$ of the synthetic data distribution.
In order to accurately estimate the ground truth proportion $r_s$ in the sharer's training dataset, we also need to address the generative error of diffusion models, denoted as $|r^*_s - r_s|$. However, this generative error is contingent upon the generative capacity of diffusion models, which is intricately linked to the training hyper-parameters and model structures, and cannot be determined analytically~\cite{beatgan, NCSN}.
Nonetheless, in Section~\ref{subsec-exp-PIA}, we empirically validate the effectiveness of Eq.~\eqref{eq-hoeffding-final} across various diffusion models and real-world datasets. Our results indicate that, within the context of diffusion models, $|r^*_s - r_s|$ tends to be small ($<0.05$), and the estimated error of our PIA can be effectively bounded by Eq.~\eqref{eq-hoeffding-final}.

\subsection{Attack Generalization}
In scenarios where the property space $\mathcal{S}$ comprises multiple values, i.e., $\mathcal{S}=\{{s}_1,\cdots,{s}_k\}$ with $k>2$, the adversary can choose to train a multi-class discriminator or a one-vs-all ensemble model~\cite{onevsall}, i.e., training $k$ binary discriminators $\{g_{\text{d}}^{(i)}\}_{i=1}^k$ with $g_{\text{d}}^{(i)}$ determining the property $s={s}_i$. 
In this paper, we adopt the one-vs-all method such that Algorithm~\ref{alg-attack-PIA} and the error bound in Eq.~\eqref{eq-hoeffding-final} can be immediately generalized to the multi-class scenario without any modifications.

In addition, the proposed attack method can be easily generalized to estimate the overall proportion of property $s$ across different classes. Suppose there are $k$ classes in the training dataset $\dbig$, i.e., $\mathcal{Y}=\{0,\cdots,k-1\}$, and let $\{m_{y=i}\}_{i=0}^{k-1}$ denote the numbers of records with label $i$ in $\dbig$.
We can infer the overall property proportion $r_{s,\mathcal{Y}}$ by initially estimating the property proportion per class $\{r_{s,y=i}\}_{i=0}^{k-1}$ and then compute  
 $r_{s,\mathcal{Y}}=\frac{\sum_{i=0}^{k-1} m_{y=i}r_{s, y=i}}{\sum_{i=0}^{k-1} m_{y=i}}$.
Note that in many applications, the record numbers of each class in the training set are not considered sensitive and can be disclosed to third parties for model utility evaluation~\cite{gpt3}. Therefore, the overall property proportion can be easily inferred using the proposed attack.
However, it is important to emphasize that the fine-grained information, i.e., the property proportion of a specific class, can pose a greater risk than the overall property proportion in practice.
For example, knowledge of the property distribution among benign software instances in the training dataset of a malware detector can help an adversary evade a company's auditing mechanism, which is more valuable than knowing the overall property proportion across benign software and malware instances in the training records~\cite{propertyPermutation}.

\vspace{0.5mm}
\noindent
\textbf{Remarks on PIA}. 
In the context of $\mathcal{A}^{(2)}_{\text{PIA}}$, we conducted an extensive exploration of various semi-supervised learning (SSL) and unsupervised learning (UL) methods for training a property discriminator. However, none of them exhibited performance comparable to the CLIP model.
SSL and UL methods typically rely on the manifold assumption~\cite{survey-SSLUL} that images of the same class share similar structures.
In cases where all images in $\dhat$ exhibit identical patterns, such as human faces, UL and SSL are not effective enough for property discrimination.
For instance, even with the state-of-the-art SSL method FixMatch~\cite{SSLfixmatch}, achieving accurate discrimination of the male property on the CelebA dataset is still challenging in our experiments, with a maximum accuracy of only $65\%$, which is insufficient for reliable property inference.
Given CLIP's consistent and robust performance in distinguishing various properties (with accuracy $>90\%$) in our experiments, we choose to employ CLIP as the property discriminator for $\mathcal{A}^{(2)}_{\text{PIA}}$ and believe that developing a learning method from scratch for the setting of $\mathcal{A}^{(2)}_{\text{PIA}}$ is not that necessary.
Note that the CLIP model may not perform well on a few specialized tasks, such as medical analysis and flower species discrimination~\cite{CLIP}. Thus, when the target property $s$ is related to specific and uncommon concepts, the first adversary $\mathcal{A}^{(1)}_{\text{PIA}}$ is a more suitable approach for property inference.

\begin{table*}[!ht]
\setlength{\tabcolsep}{5pt}
\caption{The experimental datasets.}\label{tb-datasets}
\vspace{-1mm}
\center
\small
\begin{tabular}{cccc||cccc}
\hline
\multicolumn{4}{c||}{Fairness Poisoning Attack} & \multicolumn{4}{c}{Property Inference Attack} \\
\cline{1-8}
Dataset & Type & Sensitive Features $\mathcal{S}$ & Labels $\mathcal{Y}$ & Dataset  & Type & Target Property $s$ & Target Label $y$   \\
\hline 
\hline
CelebA-S & image &	$\{\text{male}, \text{female}\}$ &	$\{\text{smiling or not}\}$ &  MNIST & image&	the number ``$0$'' & number$<5$  \\                    
CelebA-A & image & $\{\text{male}, \text{female}\}$ &	$\{\text{attractive or not}\}$ & CelebA & image & the gender ``male'' & smiling  \\
 Adult-R & tabular & $\{\text{white}, \text{non-white}\}$ &	$\{\text{income$\geq 50000$ or not}\}$ & AFAD & image & the age group $[18, 20]$ & class ``$0$''   \\
 Adult-G & tabular & $\{\text{male}, \text{female}\}$ &	$\{\text{income$\geq 50000$ or not}\}$ & Adult & tabular &	the race ``white'' & income$\geq 50000$  \\
\hline
\end{tabular}
\end{table*}

\section{Experiments}\label{sec-exp}

\begin{figure}[t]
\centering
\includegraphics[width=0.99\columnwidth]{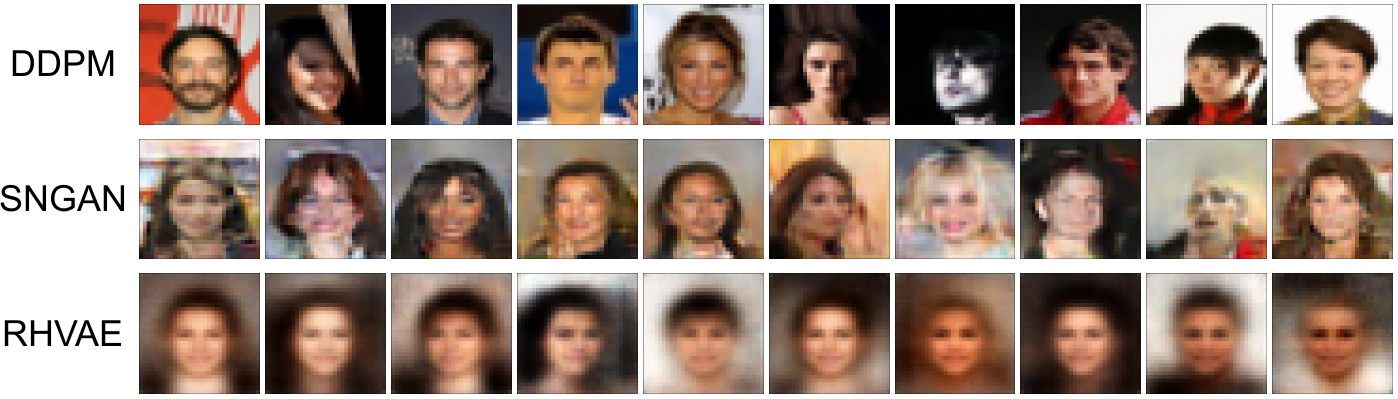}
\caption{Images sampled from different models. }
\label{fig-syn-imgs}
\end{figure}

\noindent
\textbf{Datasets and Models.} Following prior research on property inference and fairness poisoning~\cite{propertyPoison, propertyGAN, changbias}, we use three image datasets, i.e., \textit{MNIST}~\cite{mnist}, CelebFaces Attributes (\textit{CelebA})~\cite{celeba}, and the Asian Face Age Dataset (\textit{AFAD})~\cite{AFAD}, and one tabular dataset, i.e., \textit{Adult}~\cite{UCI}, in our experiments.
The property and label details of the experimental datasets are summarized in Table~\ref{tb-datasets}.
Note that for PIA, the inference errors across different properties are similar, and the overall property proportion is directly derived from the property proportions per class.
For ease of presentation, we only report the results of property inference on one target property within the target label for each dataset.
We employ three established diffusion models for image datasets, namely NCSN~\cite{NCSNv2}, DDPM~\cite{DDPM}, and SDEM~\cite{SDE}.
%
For the tabular dataset, we utilize TabDDPM~\cite{TabDDPM}.
Furthermore, we compare the attack performance on the diffusion models with that of two GAN models, i.e., the spectral normalization GAN (SNGAN)~\cite{SNGAN} and the self-supervised GAN (SSGAN)~\cite{SSGAN}, as well as one VAE model, i.e., Riemannian Hamiltonian VAE (RHVAE)~\cite{RHVAE}.
In the experiments, all models are trained from scratch until convergence. 
We give some synthetic samples in Fig.~\ref{fig-syn-imgs}.
More details about datasets and attack implementation are deferred to Appendix~\ref{appen-exp-setting}.


\noindent
\textbf{Evaluation Metrics.}
In fairness poisoning attacks, we evaluate both the losses of accuracy and fairness.
We use the ratio of accuracy degradation to evaluate the test accuracy loss caused by the poisoning attack:
\begin{equation}
    \ell_{\text{acc}} = \frac{\eta(f_{\text{c}},\dtest)-\eta(\fbias,\dtest)}{\eta(f_{\text{c}},\dtest)},
\end{equation}
where $\eta(f_{\text{c}},\dtest)$ denotes the accuracy of $f_{\text{c}}$ tested on $\dtest$, $f_{\text{c}}$ denotes the model trained on $\dhat\sim \prob_{\dbig}$, and $\fbias$ denotes the model trained on $\dhat \sim \prob_{\dpoi}$. 
If the proposed fairness poisoning attack preserves the data utility as expected, then $\ell_{\text{acc}}$ should be close to 0.
We expect $\ell_{\text{acc}}$ to be less than $5\%$ in the experiments.
To evaluate the fairness loss, we first follow previous studies~\cite{changbias, pf-chang} and define the fairness gap \textit{w.r.t.} demographic parity as $\Delta(f_{\text{c}},\dtest) = \max_{s,s'\in \mathcal{S}, (X,S)\subseteq \dtest}| \prob(f_{\text{c}}(X,S)= 1|S=s) - \prob(f_{\text{c}}(X,S)= 1|S=s')|$,
where $\Delta(f_{\text{c}},\dtest)$ should be close to 0 if $f_{\text{c}}$ is a fair model.
Then, we define the fairness loss as follows:
\begin{equation}
    \ell_{\text{fair}} = \Delta(\fbias,\dtest) - \Delta(f_{\text{c}},\dtest).
\end{equation}
A large $\ell_{\text{fair}}$ indicates a successful fairness poisoning attack.
For property inference attacks, we use the Mean Absolute Error ($\ell_1$ loss) to measure the errors between the estimated property ratio $\hat{r}_s$ and the ground truth ratio ${r}_s$:
\begin{equation}
  \ell_1 (\hat{r}_s, r_s) =|\hat{r}_s-r_s|.
\end{equation}
Since $r_s$ is bounded between 0 and 1 among different settings, the $\ell_1$ loss suffices to compare the attack errors reasonably.
In addition, unless explicitly specified, we conduct 10 independent attacks with different random seeds for each experiment and report the averaged results.

\noindent
\textbf{Baselines.}
For the fairness poisoning attack, we adopt the label flipping method as the baseline, commonly used in previous poisoning studies~\cite{pf-chang, pf-labelflip, fp-accuracy}. However, to enhance its strength, we opt for maximizing $I(X_p,S_p;Y_p)$ as described in Eq.~\eqref{eq-opt-sim} to flip the labels of the $\alpha-$fraction poisoning samples, instead of randomly flipping them as in~\cite{pf-chang}.
For the property inference attack, it is worth noting that existing methods in the literature~\cite{propertyFL, propertyMembership, propertyPermutation, propertyPoison} have mainly focused on inferring the presence of the target property in white-box settings, which significantly differ from our attacks. To provide a reasonable comparison, we employ a random guess baseline~\cite{luo2021feature, luoccs}, which randomly samples a value from the Uniform distribution $\boldsymbol{U}(0,1)$ as the estimation of $r_s$.

\begin{figure*}[t]
\centering
\begin{small}
\begin{tabular}{cccc}
\multicolumn{2}{r}{\hspace{0mm} \includegraphics[height=4mm]{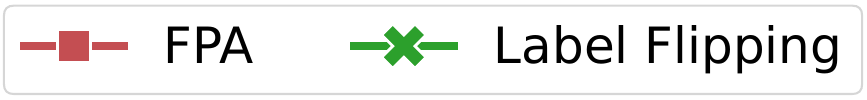}}
&
\multicolumn{2}{l}{\hspace{0mm} \includegraphics[height=4mm]{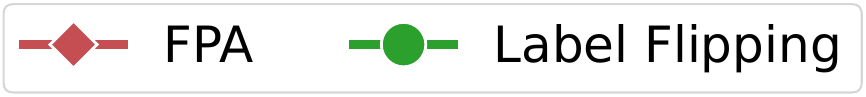}}
\vspace{-4mm}  \\
\hspace{-4mm}
\subfloat[CelebA-A]{\includegraphics[width=0.25\textwidth]{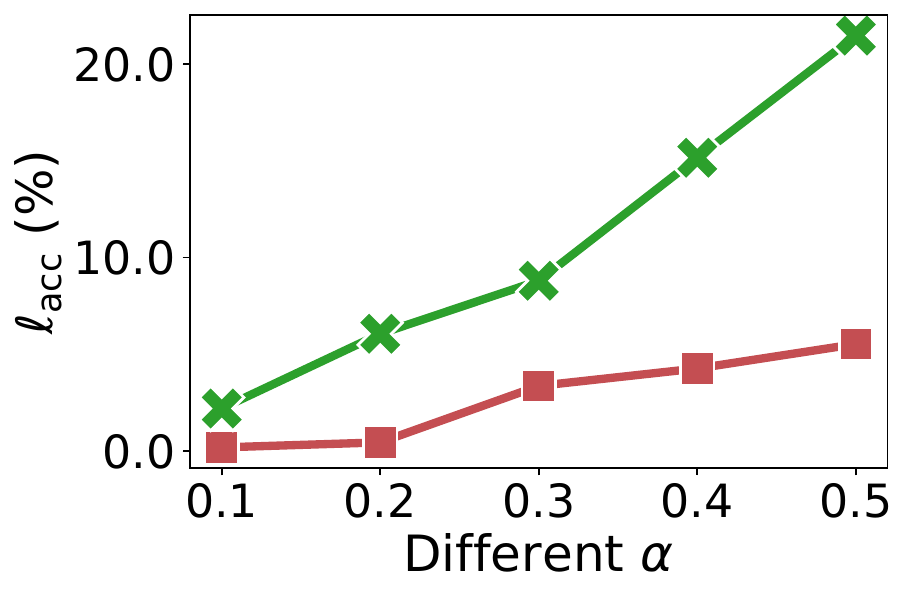}\label{subfig-FPA-acc-diff-alpha-celeba-a}}
&
\hspace{-5mm}
\subfloat[CelebA-S]{\includegraphics[width=0.25\textwidth]{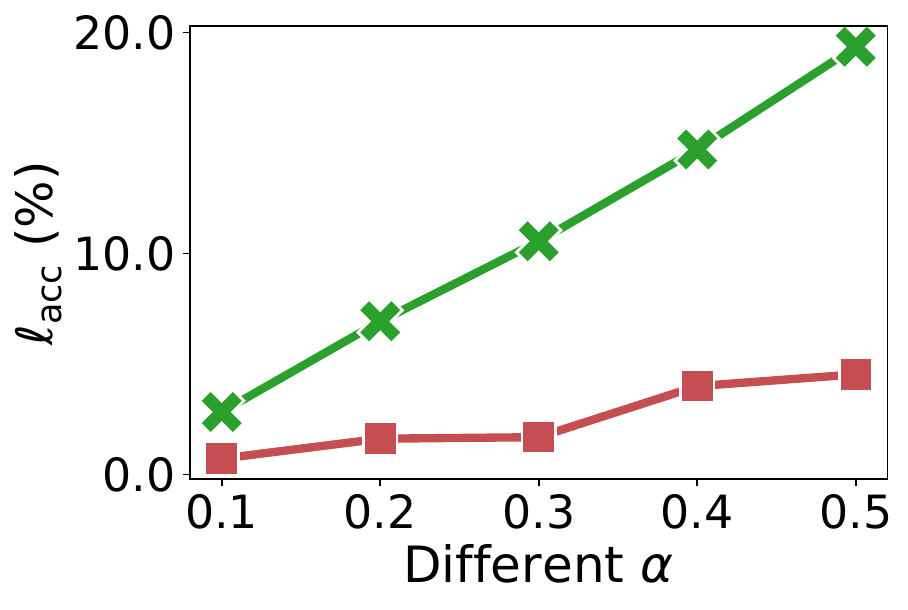}}
&
\hspace{-5mm}
\subfloat[Adult-G]{\includegraphics[width=0.25\textwidth]{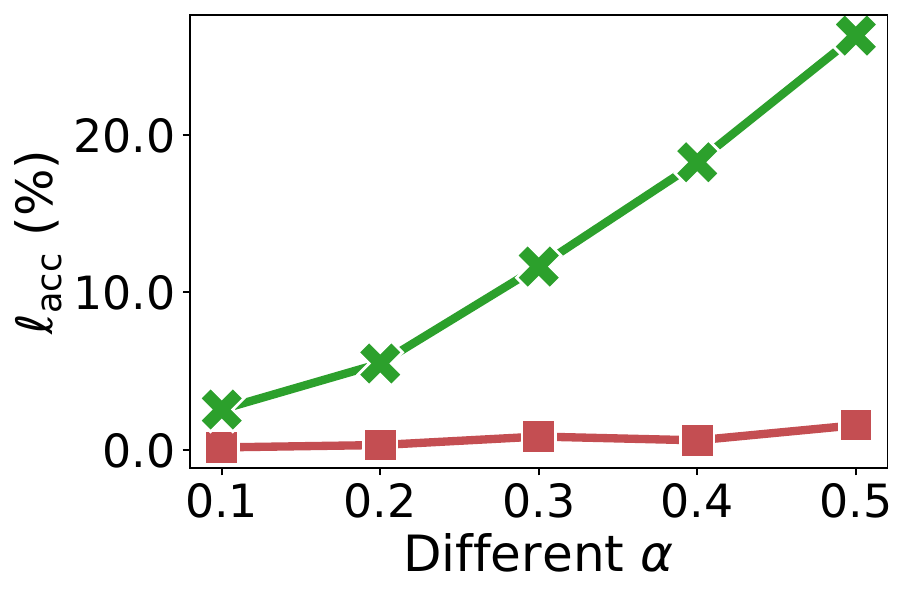}}
&
\hspace{-5mm}
\subfloat[Adult-R]{\includegraphics[width=0.25\textwidth]{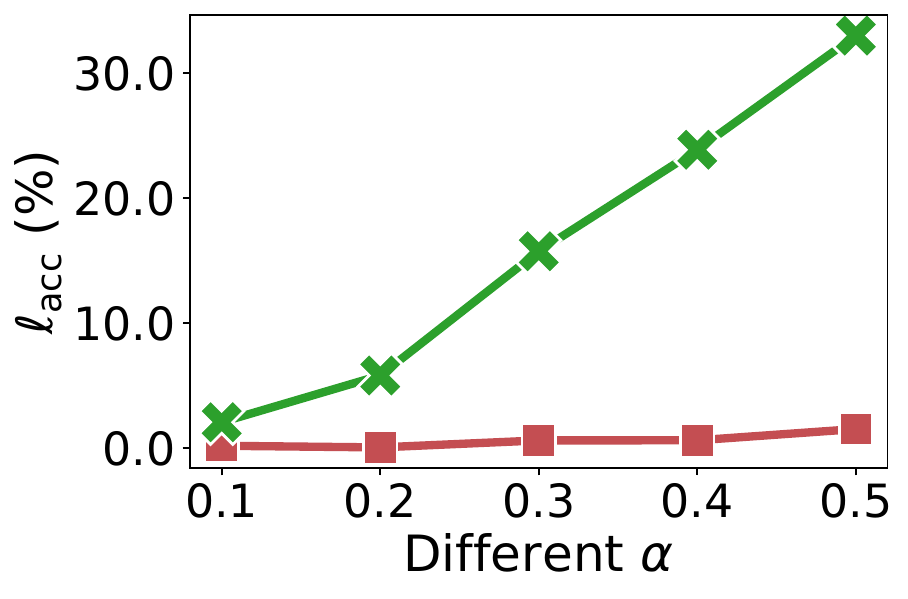}\label{subfig-FPA-acc-diff-alpha-adult-r}}
\vspace{-2mm}  \\
\hspace{-4mm}
\subfloat[CelebA-A]{\includegraphics[width=0.25\textwidth]{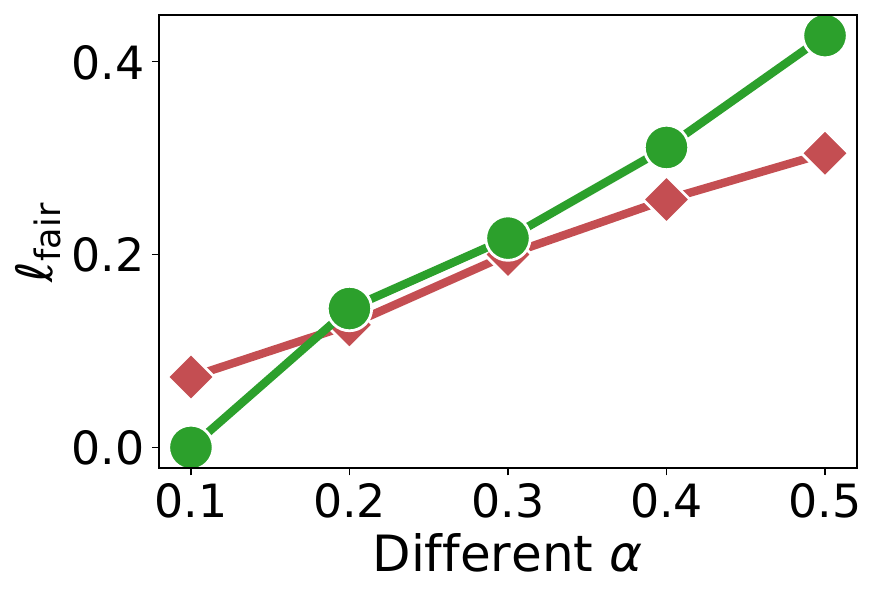}\label{subfig-FPA-fair-diff-alpha-celeba-a}}
&
\hspace{-5mm}
\subfloat[CelebA-S]{\includegraphics[width=0.25\textwidth]{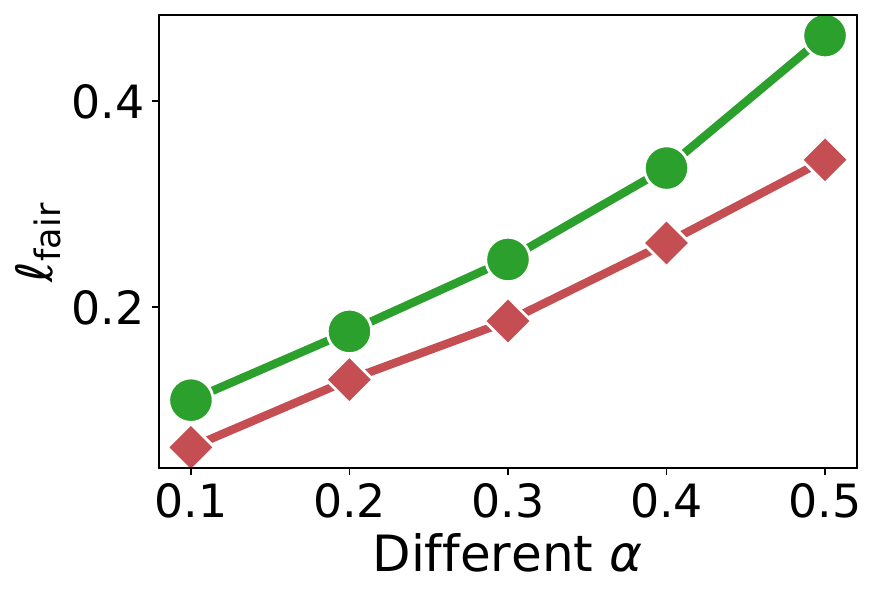}}
&
\hspace{-5mm}
\subfloat[Adult-G]{\includegraphics[width=0.25\textwidth]{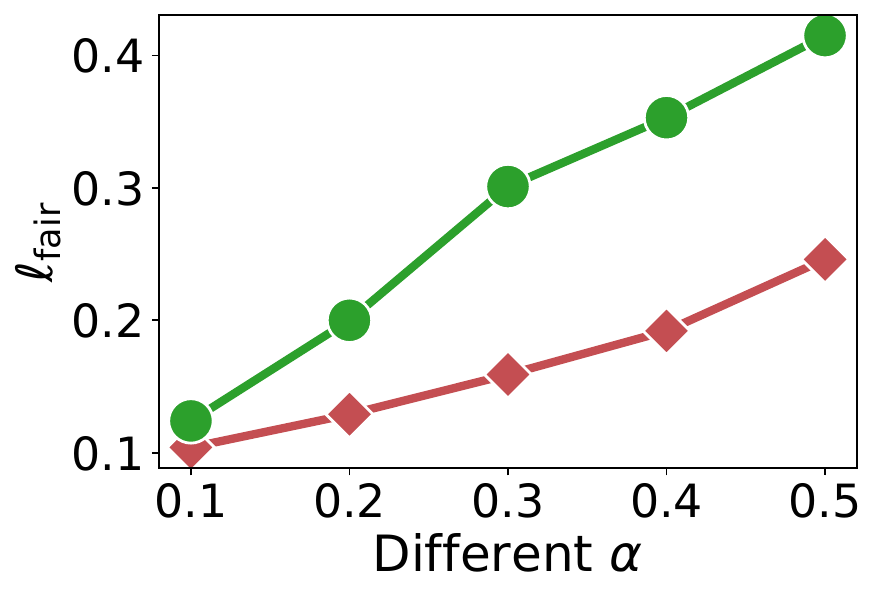}\label{subfig-FPA-fair-diff-alpha-adult-g}}
&
\hspace{-5mm}
\subfloat[Adult-R]{\includegraphics[width=0.25\textwidth]{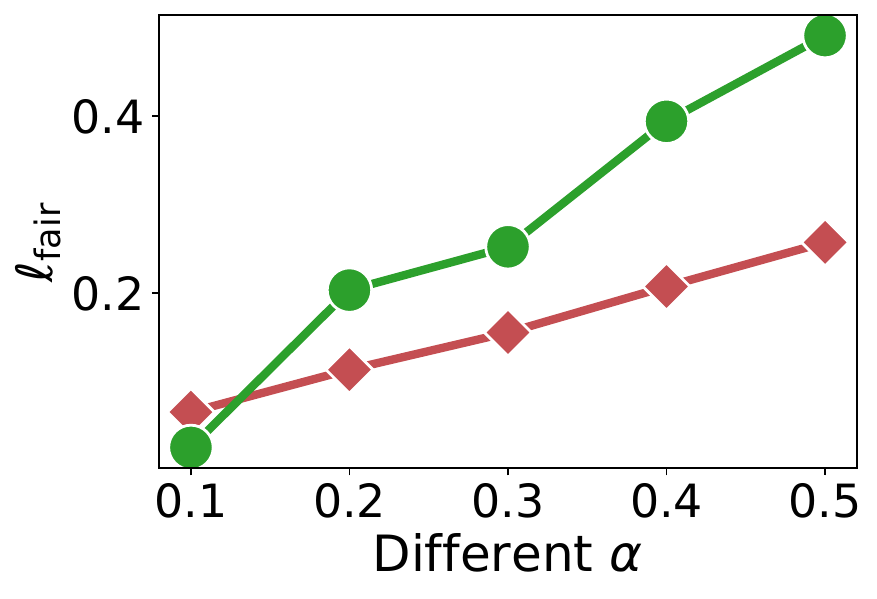}\label{subfig-FPA-fair-diff-alpha-adult-r}}
\end{tabular}
\caption{The (a)-(d) accuracy loss and (e)-(h) fairness loss caused by fairness poisoning attacks on different datasets.}
\label{fig-FPA-diff-alpha}
\end{small}
\end{figure*}

\subsection{Evaluation on Fairness Poisoning Attacks}
In this subsection, we evaluate the proposed fairness poisoning attack on two datasets: CelebA and Adult.
To obtain a comprehensive understanding of the performance of FPA, we divide CelebA and Adult datasets into sub-datasets based on labels and sensitive features, respectively. 
Specifically, we create two sub-datasets from CelebA based on the labels determining whether a face is attractive (CelebA-A) or smiling (CelebA-S), and two sub-datasets from Adult based on the sensitive features Gender $\mathcal{S}=\{\text{male}, \text{female}\}$ (Adult-G) and Race $\mathcal{S}=\{\text{white}, \text{nonwhite}\}$ (Adult-R).
By performing experiments on these four new datasets, we can examine the impact of FPA on different downstream tasks and evaluate its robustness in the face of different sensitive features.
In addition, for the clarity of presentation, we give attack results in the context of zero-shot learning in this section. Results pertaining to few-shot learning scenarios are deferred to Appendix~\ref{appendix-few-shot}.

\noindent
{\textbf{Attack Performance \textit{w.r.t.} Different Poisoning Proportions $\alpha$.}
Note that the threshold $\xi$ in the data utility constraint of Eq.~\eqref{eq-opt-sim} has a similar impact on attack results as the poisoning ratio $\alpha$. Hence, we fix $\xi=0.1c$ in Eq.~\eqref{eq-opt-sim} and vary $\alpha\in\{0.1, 0.2, 0.3, 0.4, 0.5\}$ in Algorithm~\ref{alg-FPA-greedy} for performance comparison.
%
%
Different DDPM models and downstream classifiers are trained accordingly to evaluate the accuracy and fairness losses. The results are presented in Fig.~\ref{fig-FPA-diff-alpha}.
There are four main observations.
First, the accuracy losses caused by FPA are generally less than $5\%$ for different $\alpha$, as shown in Fig.~\ref{subfig-FPA-acc-diff-alpha-celeba-a} - \ref{subfig-FPA-acc-diff-alpha-adult-r}. This observation confirms the effectiveness of the data utility constraint in the optimization objective (Eq.~\eqref{eq-opt}).
Second, the fairness losses increase with the increase of $\alpha$, as shown in Fig.~\ref{subfig-FPA-fair-diff-alpha-celeba-a} - \ref{subfig-FPA-fair-diff-alpha-adult-r}, which is expected since more poisoning data in the training dataset $\dpoi$ will inject a larger bias in the downstream classifier. 
Third, the label flipping method can cause more fairness loss than the proposed FPA, because the former aims to solve an unconstrained problem while the latter is constrained.
Correspondingly, the accuracy loss caused by label flipping is also much larger than that caused by FPA. The reason is that label flipping directly changes the correlation between inputs and labels in $\dpoi$, which could become significantly different from the distribution of $\dtest$, leading to lower test accuracy.
This observation validates the effectiveness of the proposed greedy algorithm (Algorithm~\ref{alg-FPA-greedy}) for solving the optimization objective in Eq.~\eqref{eq-opt}.
Fourth, in some cases with $\alpha=0.1$, the fairness loss caused by label flipping is smaller than the loss caused by FPA, e.g., Fig.~\ref{subfig-FPA-fair-diff-alpha-celeba-a} and \ref{subfig-FPA-fair-diff-alpha-adult-r}. 
The reason is that at a low $\alpha$, label flipping can be used as a regularization technique to help improve the model's prediction accuracy~\cite{dataaug-survey}.
In summary, the results 
demonstrate that Algorithm~\ref{alg-FPA-greedy} can significantly poison the fairness of downstream models meanwhile preserving their prediction accuracy.

\begin{table*}[htbp]
\setlength{\tabcolsep}{4pt}
\caption{Performance comparison of the proposed FPA \textit{w.r.t.} different models. LF indicates the label flipping baseline.}\label{tb-comp-FPA-model}
\vspace{-1mm}
\center
\small
\begin{tabular}{c|c|cc|cc|cc||cc|cc||cc}
\hline
\multirow{2}{*}{Dataset} &\multirow{2}{*}{Loss} & \multicolumn{2}{c|}{DDPM} & \multicolumn{2}{c|}{NCSN} & \multicolumn{2}{c||}{SDEM} & \multicolumn{2}{c|}{SNGAN} & \multicolumn{2}{c||}{SSGAN} & \multicolumn{2}{c}{RHVAE} \\
\cline{3-14}
  &   & FPA  & LF & FPA & LF & FPA &
   LF  & FPA  & LF & FPA & LF  & FPA & LF \\
\hline 
\hline
\multirow{2}{*}{CelebA-A}   & $\ell_{\text{acc}}$ & 2.61\%   & 19.04\%& 1.65\%  & 19.04\%  &	2.82\%&	17.76\% & 7.15\% & 16.85\% & 6.49\% & 14.49\% & 6.59\%	& 17.08\%
\\
                            & $\ell_{\text{fair}}$ & 0.2955  & 0.4177 & 0.2781  & 0.3734   & 	0.2664	& 0.3633 & 0.2532 & 0.3496 &  0.1453 &   0.3372 & 0.2511 & 0.3299  
                            \\                      
\hline
\multirow{2}{*}{CelebA-S}    & $\ell_{\text{acc}}$ & 3.74\%  & 19.35\%& 3.21\%  & 29.20\% &		2.66\%&	27.61\% & 4.28\% & 28.24\% & 5.49\% &  23.12\% & 7.56\%	& 20.85\%
\\
                            & $\ell_{\text{fair}}$ & 0.3325	 & 0.4542 & 0.3124  & 0.4208   & 	0.2888	&0.3784 & 0.2365 & 0.2626 &  0.1715 &   0.3687 & 0.2191 & 0.3560  
                            \\                       
\hline
\end{tabular}
\end{table*}

\begin{figure*}[t]
\centering
\begin{small}
\begin{tabular}{ccccc}
\hspace{-4mm}
\subfloat[{$\dbig$}]{\includegraphics[width=0.2\textwidth]{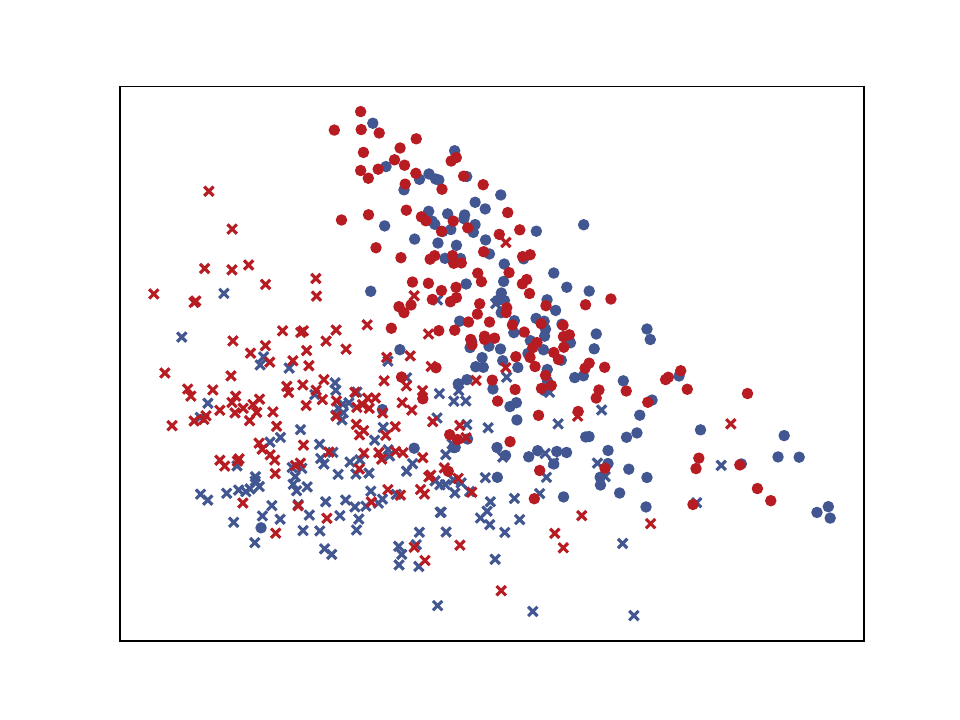}\label{subfig-dist-orig}}
&
\hspace{-4mm}
\subfloat[{$\dpoi$}]{\includegraphics[width=0.2\textwidth]{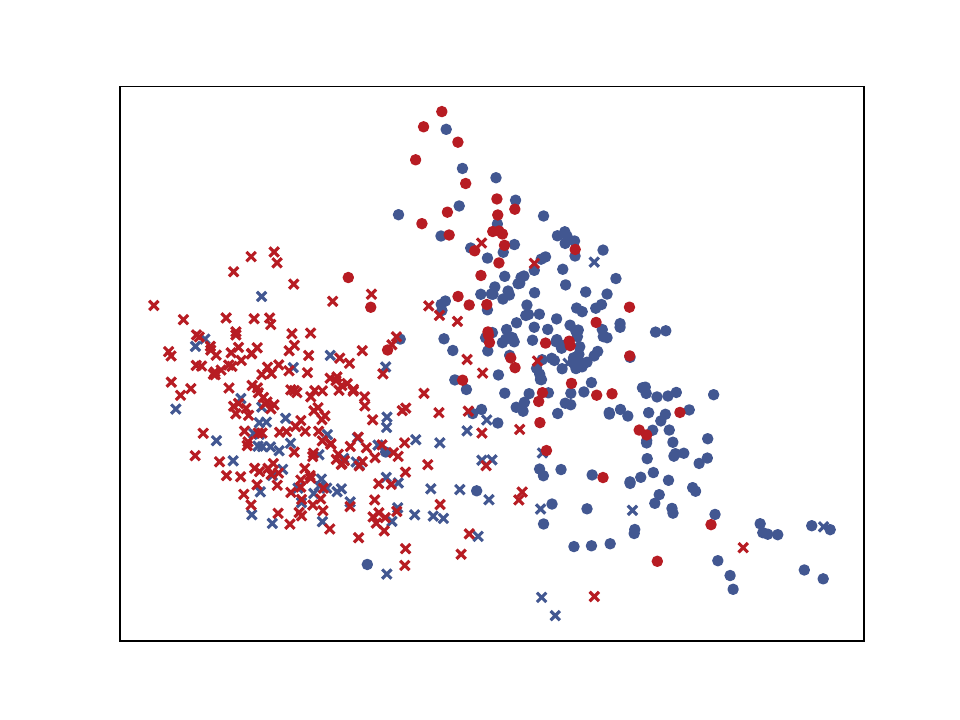}\label{subfig-dist-poi}}
&
\hspace{-4mm}
\subfloat[{{$\dhat$ from NCSN}}]{\includegraphics[width=0.2\textwidth]{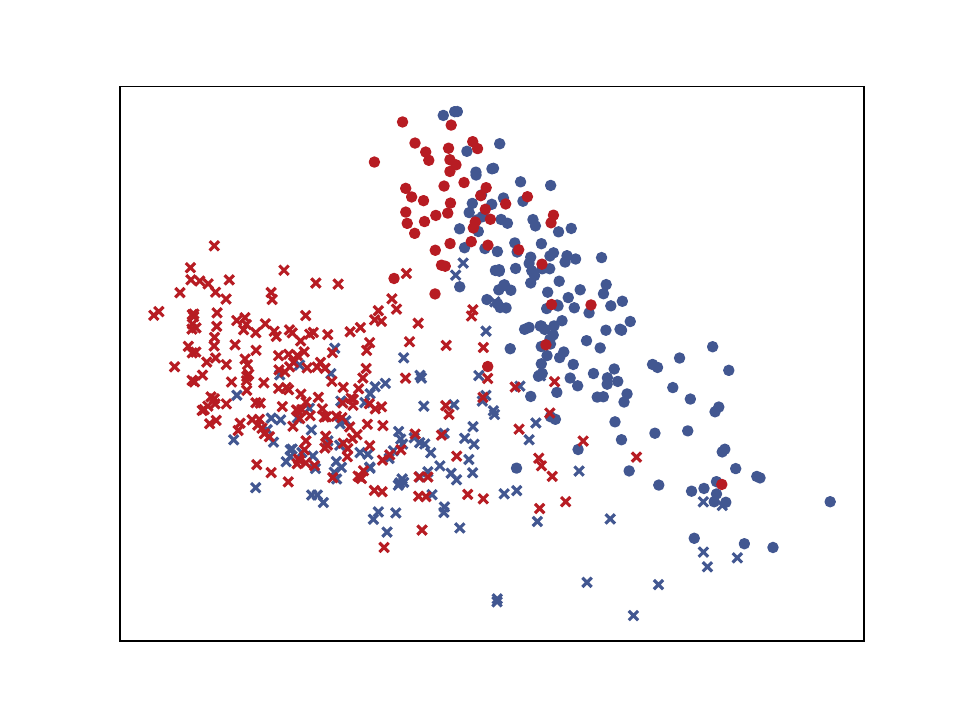}\label{subfig-dist-ncsn}}
&
\hspace{-4mm}
\subfloat[{{$\dhat$ from SNGAN}}]{\includegraphics[width=0.2\textwidth]{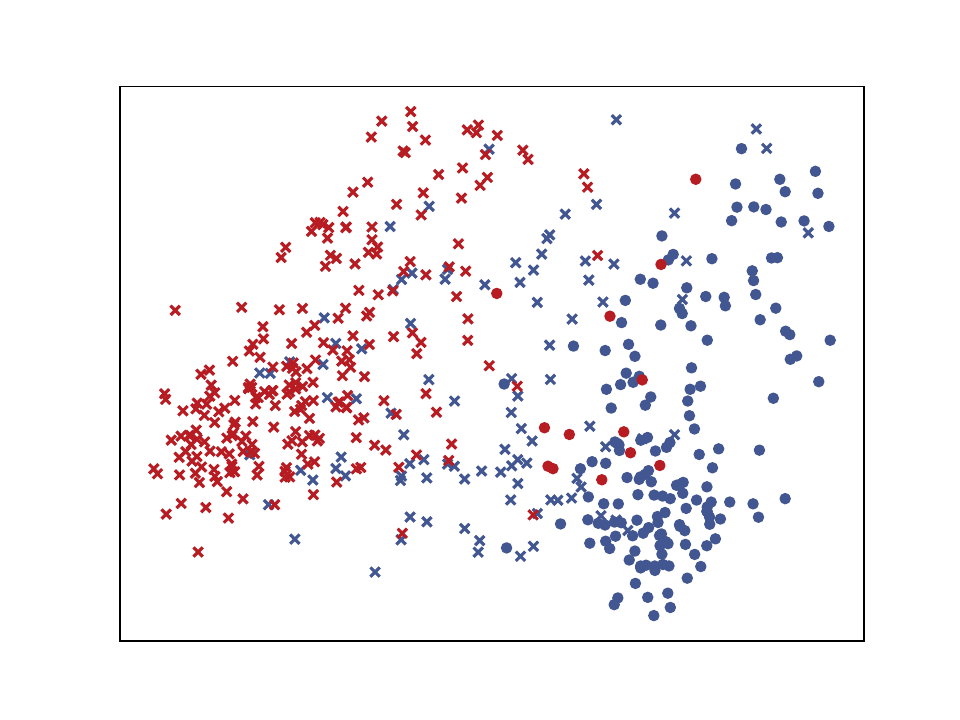}\label{subfig-dist-sngan}} 
&
\hspace{-4mm}
\subfloat[{{$\dhat$ from RHVAE}}]{\includegraphics[width=0.2\textwidth]{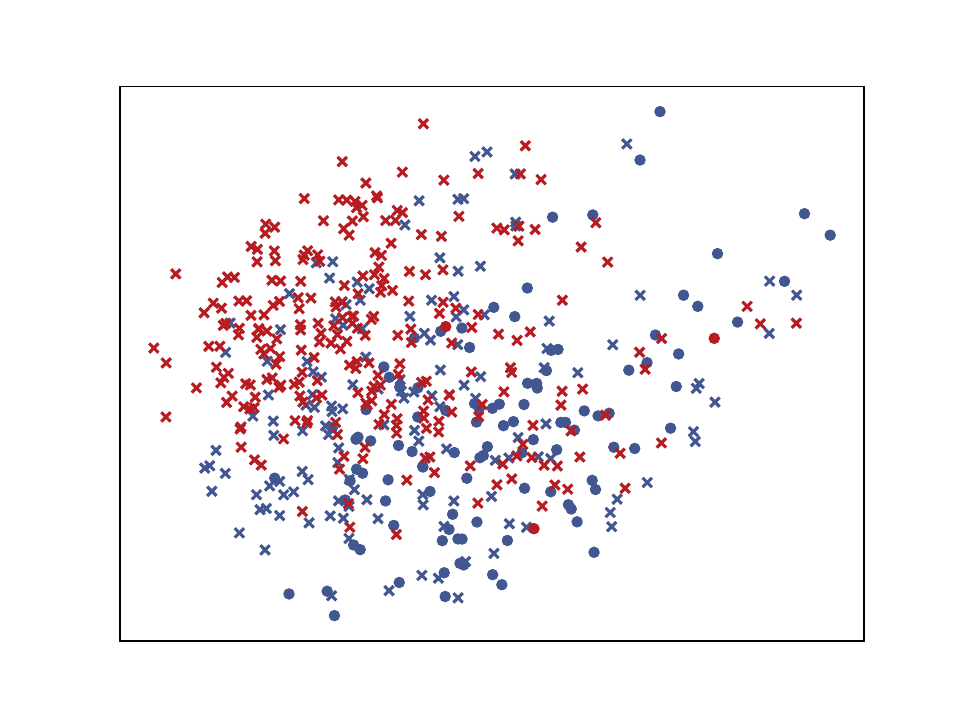}\label{subfig-dist-rhvae}}
\end{tabular}
\caption{Examples of the distributions of $\dbig$, $\dpoi$, and $\dhat$ based on CelebA-S. Different colors denote different classes. $\times$ denotes male, and $\circ$ denote female. Please zoom in for better viewing.}
\label{fig-dist}
\end{small}
\end{figure*}

\noindent
{\textbf{Attack Performance \textit{w.r.t.} Different Generative Models.}
In this subsection, we present a comparative analysis of the effectiveness of fairness poisoning attacks when tested on different generative models. The experimental evaluation is performed on the CelebA-A and CelebA-S datasets.
We start by fixing the poisoning proportion $\alpha$ to 0.5 in the proposed FPA and then assess the accuracy and fairness losses on different generative models. Table~\ref{tb-comp-FPA-model} summarizes the attack results.
The analysis of the results leads to three main observations.
First, the accuracy losses of FPA tested on diffusion models are smaller than those tested on other models. Specifically, the $\ell_{\text{acc}}$ of FPA tested on DDPM and NCSN are $2.61\%$ and $1.65\%$, respectively, which are significantly smaller than the $\ell_{\text{acc}}$ tested on SNGAN and RHVAE, i.e., $7.15\%$ and $6.59\%$.
This trend can be attributed to that the datasets $\dhat$ sampled from GAN and VAE models have relatively worse visual quality, which results in a slight distribution drift between $\dhat$ and $\dtest$, hence worse test accuracy.
Second, the fairness losses of FPA tested on diffusion models are larger than those tested on other models. For instance, the $\ell_{\text{fair}}$ of FPA tested on DDPM and NCSN are $0.2955$ and $0.2781$, which are larger than the $\ell_{\text{fair}}$ tested on SNGAN and RHVAE, i.e., $0.2532$ and $0.2511$.
This observation can be attributed to the relatively inferior distribution coverage exhibited by GANs, as well as the VAE's limitations in capturing fine-grained visual details (see Fig.~\ref{fig-syn-imgs}), which reduces the bias transmitted from $\dpoi$ to $\dhat$ along the workflow in Fig.~\ref{fig-overview}.
Third, we note that the accuracy losses of label flipping (LF) tested on diffusion models are larger than those tested on GANs. For example, the $\ell_{\text{acc}}$ of LF tested on DDPM and NCSN are $19.04\%$, which are higher than the $\ell_{\text{acc}}$ tested on SNGAN and SSGAN, i.e., $16.85\%$ and $14.49\%$.
This observation can also be attributed to diffusion models' better distribution coverage than GANs. A larger distribution bias transmitted from $\dpoi$ to $\dhat$ will cause the classifier to learn a different distribution from that of $\dtest$, resulting in worse test accuracy.

To further illustrate the impact of the proposed FPA on the workflow in Fig.~\ref{fig-overview}, we depict the distribution of $\dbig$, $\dpoi$, and three $\dhat$ generated from NCSN, SNGAN, and RHVAE in Fig.~\ref{fig-dist}. The points in these figures are produced by first extracting image semantic features via the CLIP model and then utilizing principal component analysis~\cite{PCA} to extract the two most important features as the coordinates of different images.
%
%
Comparing Fig.~\ref{subfig-dist-orig} with Fig.~\ref{subfig-dist-poi}, the distributions of blue points and red points in $\dpoi$ are similar to those in $\dbig$, indicating that the sampled poisoning dataset $\dpoi$ well preserves the data utility of $\dbig$.
Moreover, we see from Fig.~\ref{subfig-dist-poi} that the $\circ$ points in red color and the $\times$ points in blue color are significantly reduced compared to the original dataset (Fig.~\ref{subfig-dist-orig}), indicating the fairness bias injected by Algorithm~\ref{alg-FPA-greedy}.
Comparing Fig.~\ref{subfig-dist-ncsn} with Fig.~\ref{subfig-dist-sngan} and \ref{subfig-dist-rhvae}, we observe that the samples generated from NCSN can well preserve the distribution of $\dpoi$, while the samples from SNGAN and RHVAE follow shifted distributions that are different from $\prob_{\dpoi}$.

In summary, the proposed FPA can achieve relatively better attack results on diffusion models than on GANs and VAEs, i.e., smaller accuracy losses and larger fairness losses.



\begin{figure*}[!ht]
\centering
\begin{small}
\begin{tabular}{cccc}
\hspace{-4mm}
\subfloat[{MNIST $\mathcal{A}^{(1)}_{\text{PIA}}$}]{\includegraphics[width=0.25\textwidth]{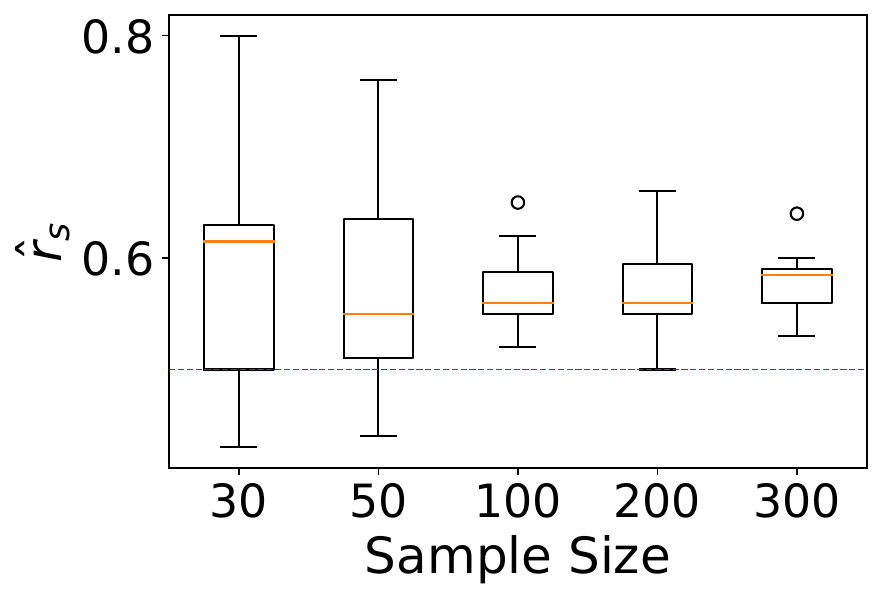}\label{subfig-box-mnist-withaux}}
&
\hspace{-5mm}
\subfloat[{MNIST $\mathcal{A}^{(2)}_{\text{PIA}}$}]{\includegraphics[width=0.25\linewidth]{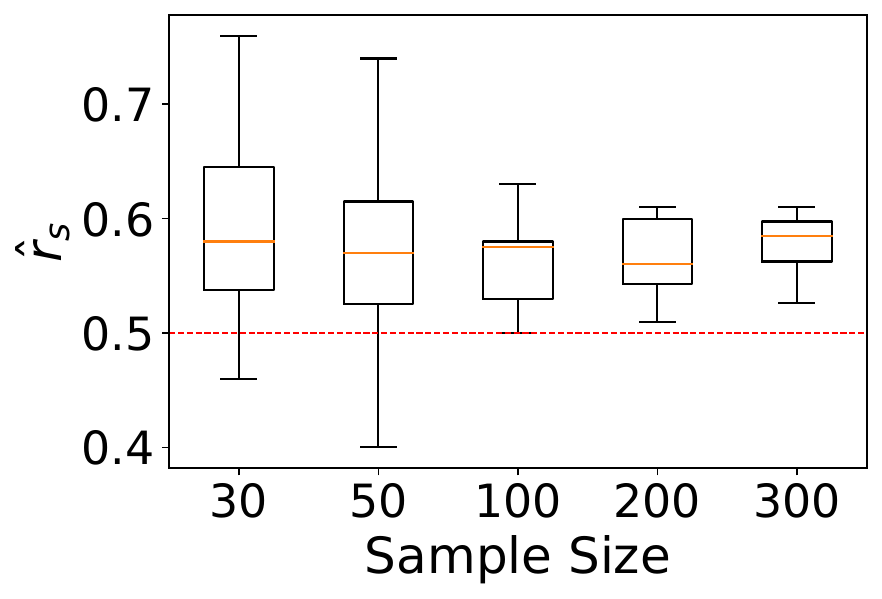}}
&
\hspace{-5mm}
\subfloat[{{CelebA $\mathcal{A}^{(1)}_{\text{PIA}}$}}]{\includegraphics[width=0.25\linewidth]{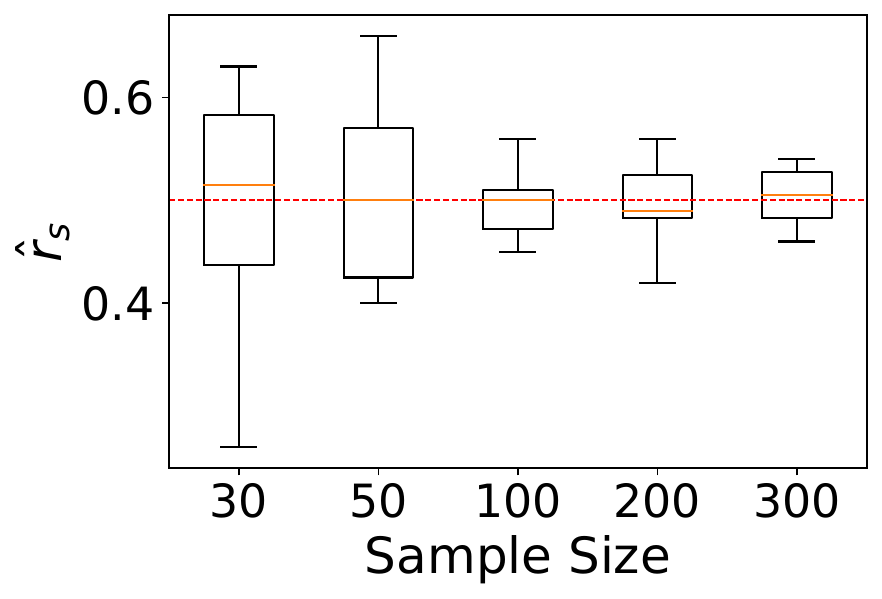}\label{subfig-box-celeba-withaux}}
&
\hspace{-5mm}
\subfloat[{{CelebA $\mathcal{A}^{(2)}_{\text{PIA}}$}}]{\includegraphics[width=0.25\linewidth]{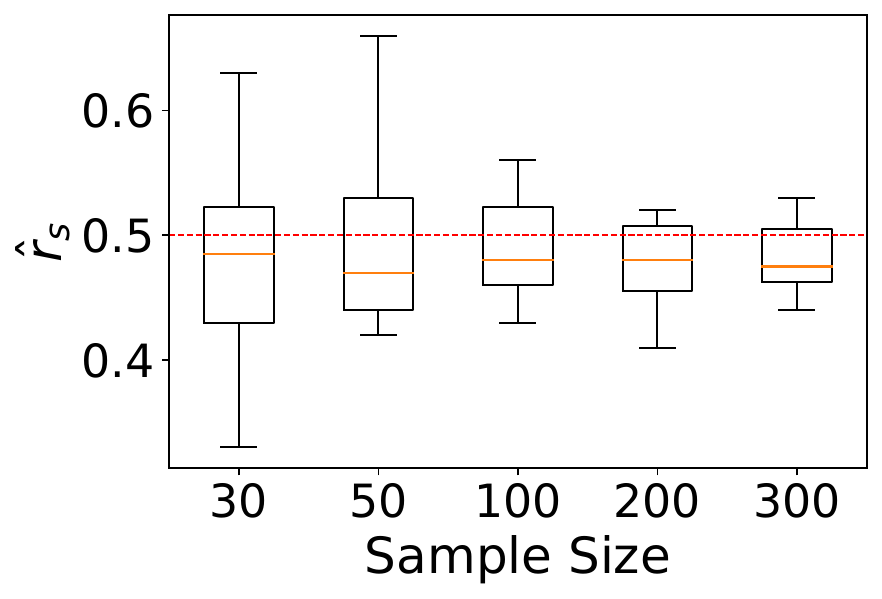}\label{subfig-box-celeba-noaux}} 
\end{tabular}
\centering
\caption{The results of ten PIA attacks on MNIST and CelebA. The red lines denote the ground truth $r_s=0.5$.}
\label{fig-box-mnist-celeba}
\end{small}
\end{figure*}

\begin{figure}[!ht]
\centering
\begin{small}
\begin{tabular}{cc}
\hspace{-4mm}
\subfloat[{AFAD $\mathcal{A}^{(1)}_{\text{PIA}}$}]{\includegraphics[width=0.25\textwidth]{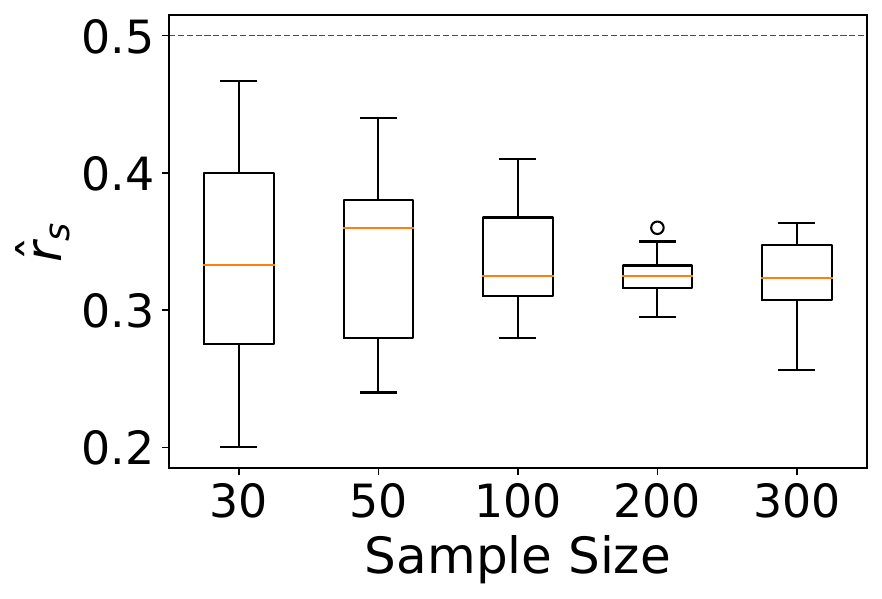}\label{subfig-box-afad-withaux}}
&
\hspace{-5mm}
\subfloat[{AFAD $\mathcal{A}^{(2)}_{\text{PIA}}$}]{\includegraphics[width=0.25\textwidth]{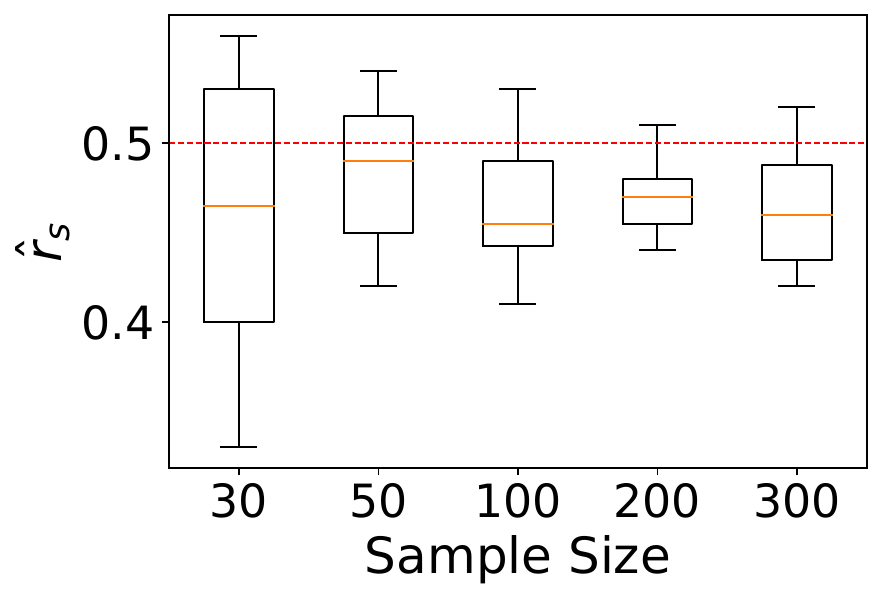}\label{subfig-box-afad-noaux}}
\vspace{-2mm}
\end{tabular}
\caption{The results of ten PIA attacks on AFAD with the red lines denoting the ground truth $r_s=0.5$.}
\label{fig-box-afad}
\end{small}
\end{figure}

\subsection{Evaluation on Property Inference Attacks}\label{subsec-exp-PIA}
\noindent
{\textbf{Evaluation on the Estimation Error of PIA.} In this subsection, we empirically evaluate the error bound of PIA described in Eq.~\eqref{eq-hoeffding-final}. 
Based on Eq.~\eqref{eq-hoeffding-final}, if fixing $\epsilon=0.1$ and sampling $\{30, 50, 100, 200, 300\}$ images, we can ensure that the probabilities of $|\hat{r}_s-r_s|\geq 0.1+\epsilon_d$ are less than $\{1, 0.736, 0.271, 0.037,0.005\}$, where $\epsilon_d$ denotes the prediction error of the property discriminator.
Now we fix $r_s=0.5$ and train NCSN models on different datasets. Then, we perform PIA on these trained models ten times and visualize the resulting estimations $\hat{r}_s$ using box plots, as shown in Fig.~\ref{fig-box-mnist-celeba} and \ref{fig-box-afad}. 
In $\mathcal{A}^{(1)}_{\text{PIA}}$, the $\epsilon_d$s of property discriminators trained on MNIST, CelebA, and AFAD are 0.04, 0.08, and 0.2, respectively.
From Fig.~\ref{fig-box-mnist-celeba} and \ref{fig-box-afad}, we make three main observations.
First, increasing the sample size results in a convergence of the estimated $\hat{r}_s$ to a smaller value range that is closer to the ground truth $r_s$, as illustrated in Fig.~\ref{subfig-box-celeba-withaux} and \ref{subfig-box-celeba-noaux}.
This observation validates the estimation error bound in Eq.~\eqref{eq-hoeffding-final}.
Second, the attack results of $\mathcal{A}^{(2)}_{\text{PIA}}$ are comparable to those of $\mathcal{A}^{(1)}_{\text{PIA}}$ in MNIST and CelebA.
This indicates that if auxiliary data is unavailable, using the pre-trained CLIP model to perform PIA can still achieve impressive performance.
Third, the estimated error is impacted by the accuracy of the property discriminator.
For instance, in the case where the property discriminator is trained on AFAD with an $\epsilon_d$ of 0.2, a larger estimation error is observed in Fig.~\ref{subfig-box-afad-withaux} compared to the attacks on other datasets.
In this scenario, using a pre-trained CLIP model with an $\epsilon_d$ of 0.1 can improve the results as shown in Fig.~\ref{subfig-box-afad-noaux}.

\noindent
{\textbf{Attack Performance \textit{w.r.t.} Different Property Proportions $r_s$.}
In this subsection, we evaluate the performance of the proposed PIA \textit{w.r.t.} different ground truth $r_s$.
Specifically, we fix the sample size to 200 and use NCSN and TabDDPM as the diffusion models for image datasets and tabular datasets, respectively.
We then vary the target property proportion $r_s$ among $\{0.1, 0.3, 0.5, 0.7, 0.9\}$ in $\dbig$ and train multiple diffusion models. Subsequently, we apply the proposed PIA on these diffusion models and report the attack results in Fig.~\ref{fig-PIA-diff-ratio}.
Note that for the tabular dataset Adult, the target property $s$ is a numerical feature that is readily obtainable from the samples. Therefore, there is no need to train a property discriminator specifically for Adult. We directly give the attack results on Adult as $\{0.064,	0.039,	0.040,	0.011,	0.043\}$.
As depicted in Fig.~\ref{fig-PIA-diff-ratio}, our proposed PIA approach can accurately estimate different target property proportions $r_s$, typically with errors less than 0.1.
This observation indicates the effectiveness and robustness of the proposed PIA approach.
Moreover, in Fig.~\ref{subfig-PIA-diff-ratio-withaux}, we notice that with the increase of $r_s$, the attack errors on AFAD also increase, which is due to the \textit{bias} of the property discriminator $g_{\text{d}}$, specifically, the prediction error $\epsilon_d=0.2$ for the target property age $[18, 20]$ and $\epsilon_d=0.1$ for age $[30, 39]$. 
To illustrate the impact of this bias on performance degradation, consider a scenario where 100 images are sampled for performing PIA via the biased $g_{\text{d}}$.
If the ground truth $r_s=0.1$, then the estimated $\hat{r}_s$ is 0.17 with an error of 0.07; meanwhile, when $r_s=0.9$, the estimated $r_s$ becomes 0.73, incurring a larger error of 0.17. 
This finding indicates that to achieve robust and reliable PIA, the attacker needs to carefully ensure the unbiasedness of $g_{\text{d}}$ discussed in Section~\ref{subsec-error-bound}.


\begin{figure}[t]
\centering
\begin{small}
\begin{tabular}{cc}
\multicolumn{2}{c}{\hspace{0mm} \includegraphics[height=4mm]{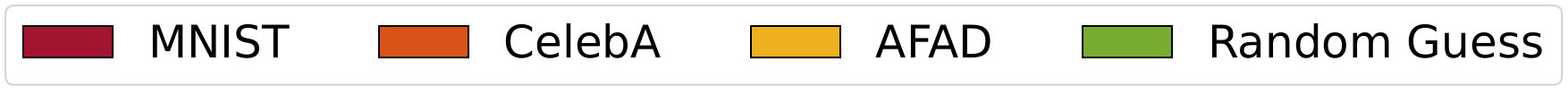}}
\vspace{-4mm}  \\
\hspace{-4mm}
\subfloat[$\mathcal{A}^{(1)}_{\text{PIA}}$]{\includegraphics[width=0.48\columnwidth]{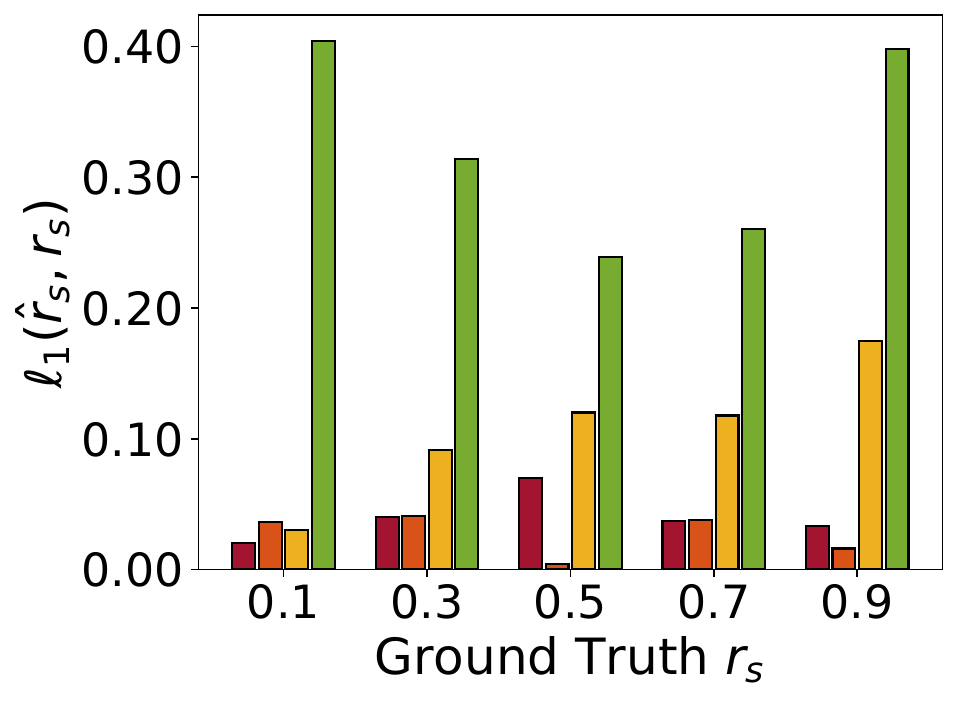}\label{subfig-PIA-diff-ratio-withaux}}
&
\hspace{-5mm}
\subfloat[$\mathcal{A}^{(2)}_{\text{PIA}}$]{\includegraphics[width=0.48\columnwidth]{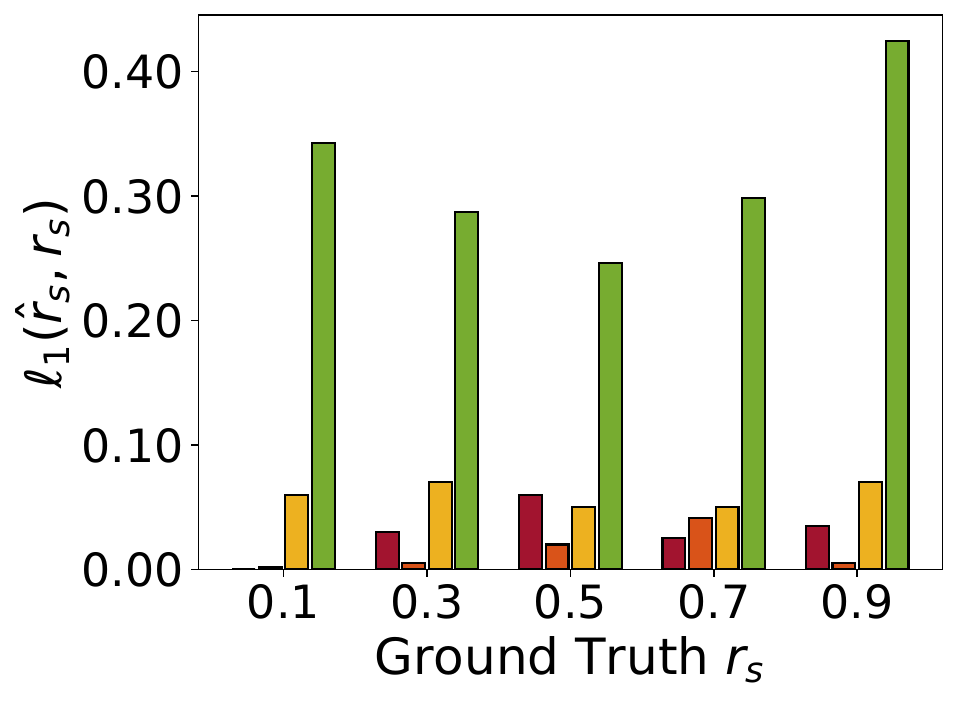}\label{subfig-PIA-diff-ratio-noaux}}
\end{tabular}
\caption{The results of PIA attacks \textit{w.r.t.} different $r_s$.}
\label{fig-PIA-diff-ratio}
\end{small}
\end{figure}

\noindent
{\textbf{Attack Performance \textit{w.r.t.} Different Generative Models.}
Now we compare PIA's attack performance on different generative models.
We conducted experiments with a fixed sample size of 200 and $r_s=0.5$, and trained different generative models on different datasets to evaluate the performance of PIA. 
The results are summarized in Table~\ref{tb-comp-PIA-model}.
The main observation from Table~\ref{tb-comp-PIA-model} is that PIA achieves more accurate attack results on diffusion models than on GAN and VAE models. 
This is attributed to the better distribution coverage and sampling quality of diffusion models compared to other types of models, as discussed in Section~\ref{subsec-PIA-details}.
In addition, we observe that the attacks achieve the largest error when tested on SNGAN trained on MNIST.
The reason is that \textit{mode collapse} exists in this SNGAN, i.e., $80\%$ of the generated samples are of number 0, which is far from the ground truth $r_s=0.5$.
%

In addition, the experiments regarding the impact of model underfitting on our attacks are deferred to Appendix~\ref{appen-exp-underfitting}.

\begin{table}[t!]
\setlength{\tabcolsep}{1.2pt}
\caption{Attack comparison of PIA \textit{w.r.t.} different models.}\label{tb-comp-PIA-model}
\vspace{-1mm}
\center
\begin{small}
\begin{tabular}{c|c|ccc||cc||c}
\hline
Dataset &Attack & DDPM & NCSN & SDEM & SNGAN & SSGAN & RHVAE\\
\hline 
\hline
\multirow{2}{*}{MNIST}      & $\mathcal{A}^{(1)}_{\text{PIA}}$ & 0.043	& 0.073   & 0.053 
    &	0.292  &	0.045  & 0.136  \\
                            & $\mathcal{A}^{(2)}_{\text{PIA}}$ & 0.012  &	0.060 & 
0.085	&  0.203  &	 0.095 & 0.131    \\                      
\hline
\multirow{2}{*}{CelebA}    & $\mathcal{A}^{(1)}_{\text{PIA}}$  & 0.056     &	0.004 &	0.024	&  0.114   &    0.091  & 0.201 \\
                            & $\mathcal{A}^{(2)}_{\text{PIA}}$ & 0.042     &	0.020 &	
0.012	&  0.166  & 	0.129  & 0.189  \\    
\hline
\multirow{2}{*}{AFAD}       & $\mathcal{A}^{(1)}_{\text{PIA}}$ & 0.055  &	0.083	  & 0.102	&     0.144	   &      0.100   & 0.179 \\
                            & $\mathcal{A}^{(2)}_{\text{PIA}}$ & 0.107  &	0.040 &	
0.087   &	 0.098   &   0.123   & 0.200     \\                              
\hline
\end{tabular}
\end{small}
\end{table}

\section{Discussion}\label{sec-discussion}

\noindent
\textbf{Connection between PIA and FPA.} 
In this paper, we explore the inherent risks associated with the prevalent data-sharing pipeline via diffusion models through a fairness poisoning attack on the sharer's side and a property inference attack on the receiver's side. 
Note that the PIA and FPA both fall within the category of distribution-based attacks.
In an untrusted collaborative environment~\cite{ndssByzantine, securityByzantine}, these two attacks can be employed as countermeasures against each other. Specifically, the PIA can be used by the receiver to audit the distribution of $\dhat$ to eliminate possible bias, while the FPA can also be used by the sharer to change the distribution of sensitive properties for defending against the inference attack initiated by the receiver.
Additional details regarding these countermeasures are deferred to Appendix~\ref{appendix-sec-countermeasure}.
By presenting PIA and FPA together, we derive two significant findings for the data-sharing pipeline: first, it becomes apparent that both parties involved in the data-sharing process may be exposed to risks. This underscores the necessity for the implementation of secure and ethical protocols to safeguard both parties in real-world applications; second, we recognize that both FPA and PIA are inherently interconnected, and their performance can potentially be influenced by each other (Appendix~\ref{appendix-sec-countermeasure}). This observation can offer some insights for designing robust data-sharing mechanisms in future research.
%
%
We believe that the proposed attacks can serve as catalysts for further exploration of generalized and practical defense mechanisms against distribution-based attacks in future studies.

\noindent
\textbf{Attack Limitation.} 
The generalization of the proposed attacks to other generative models, such as GANs and VAEs, may be relatively limited, as the performance of our attacks depends on the distribution coverage feature of generative models. 
However, it is important to highlight that practical data-sharing applications require generative models with robust distribution coverage. Inadequate sampling density, as observed in GANs and VAEs, can compromise data utility, making them unsuitable for data-sharing purposes~\cite{healthRecordSharing}. 
Recent studies~\cite{bond2021deep} have demonstrated that diffusion models surpass other types of generative models in both sampling quality and distribution coverage, rendering them the most favored choice in recent data-sharing investigations~\cite{augStableDiffusion, healthRecordSharing, zeroshoticlr, sharingHistopathology, zeroshotdiversity}. Consequently, 
while our attacks may not exhibit broad generality across all types of generative models, they are still suitable for practical and promising data-sharing scenarios.


\section{Related Work}\label{sec-related-work}
\noindent
\textbf{Generative Models.} 
The recent developments in generative models can be classified into three main categories, namely likelihood-based methods~\cite{nice, likelihoodPixel, VAE}, GAN-based methods~\cite{GAN, SSGAN, SNGAN}, and diffusion models~\cite{DDPM, SDE, NCSN, stableDiffusion}.
%
%
A comparative study~\cite{beatgan} between GANs and diffusion models has demonstrated that the latter can generate images with comparable visual qualities to those produced by the state-of-the-art GAN models while offering better distribution coverage and being easier to train.
Given these advantages, diffusion models have been widely used in different domains, such as zero-shot learning~\cite{zeroshotdiversity, zeroshoticlr} and data sharing~\cite{sharingMammography, sharingMedicalImages, sharingHistopathology, sharingRadiograph}.
%
%
Despite their potential benefits, the risks associated with sharing diffusion models have not been fully explored.
%

\noindent
\textbf{Inference Attacks.}
Machine learning algorithms are susceptible to inference attacks, which can be broadly classified into property inference~\cite{jiang2024data, propertyFL, propertyGAN, propertyMembership, zhu2023passive}, membership inference~\cite{membershipFL,membershipMLLeaks,ShokriSSS17}, and feature inference attacks~\cite{luo2021feature,luo2021fusion,luoccs,featureGAN,featureModelInversion}.
While membership inference and feature inference attacks aim to reveal the record-level privacy of the private training datasets, property inference attacks seek to obtain group-level information about these   datasets.
Recent studies have also proposed membership inference~\cite{diffusionFeature, diffusionMembership1, diffusionMembership2} on diffusion models and property inference on GANs~\cite{propertyGAN}, but these attacks mainly require white-box access to the pre-trained models~\cite{diffusionFeature, diffusionMembership1, diffusionMembership2, propertyGAN}.
%
%
In this paper, we focus on property inference attacks against diffusion models, in a more practical and restricted setting, i.e., black-box attacks.
Our proposed attack is lightweight and exhibits high inference accuracy, as demonstrated in our experiments. 

\noindent
\textbf{Poisoning Attacks.}
Different from inference attacks, poisoning attacks require the adversary to actively modify the training datasets for changing the behaviors of downstream models.
Poisoning attacks can be commonly classified as backdoor injection~\cite{backdoor1,backdoorFL,backdoorOnePixel,backdoorDiffusion}, accuracy poisoning~\cite{accuracyPoison1,accuracyPoison2,accuracyPoison3}, and fairness poisoning~\cite{fp-accuracy,pf-chang,pf-dasfaa,pf-ecml,pf-labelflip}.
In this paper, we focus on fairness poisoning. 
Existing research~\cite{fp-accuracy,pf-chang,pf-dasfaa,pf-ecml,pf-labelflip} on this type of attack generally aims to degrade both model accuracy and fairness by designing a loss function that incorporates both model loss and fairness gap, followed by selecting poisoning records from an auxiliary dataset based on this function.
However, since model losses are needed during the execution of attack algorithms, these attacks require white-box access to the model architecture and training algorithm, which is a relatively strong assumption.
In contrast, we present a novel approach to fairness poisoning that degrades fairness while preserving the accuracy of the target model. Compared to current studies, our approach is more practical and stealthy as it does not require access to the target model.

\section{Conclusion}\label{sec-conclusion}
In this paper, we examine the privacy and fairness vulnerabilities that arise in diffusion model sharing through an adversarial lens.
Specifically, we introduce a fairness poisoning attack at the sharer's side and a property inference attack at the receiver's side.  
%
%
Through extensive experiments across various diffusion models and datasets, we demonstrate the efficacy of the proposed attacks. 
%


\bibliographystyle{IEEEtran}
\balance
\bibliography{references.bib}

\clearpage
\appendix

\subsection{Proof of Theorem~\ref{thm-error-bound}}\label{appendix-proof-thm}
\begin{proof}
Algorithm~\ref{alg-attack-PIA} takes $\hat{m}$ samples from the diffusion model to construct the synthetic dataset $\hat{\mathcal{D}}_{y=0}$.
Accordingly, the formation of $\hat{\mathcal{D}}_{y=0}$ is equivalent to independently drawing $\hat{m}$ samples of $X$, or, sampling $\hat{m}$ independent random variables identical to $X$ for once.  
Abusing the notation a bit, we use $S\in\{0,1\}$ to represent the indicator random variable that characterizes the presence of the sensitive feature $S$ over the samplings on $X$, $S$ itself is also a random variable associated (correlated) to $X$.
Let $\hat{S}_i$ be the estimates of $S_i$ \textit{w.r.t.} each $i\in \{1,\cdots,\hat{m}\}$.
The true proportion of $s$ (i.e., $S=1$) based on $\hat{\mathcal{D}}_{y=0}$ is denoted by 
\begin{equation}\label{eq-hoeffding-pre00}
    \hat{r}^*_s = \frac{1}{\hat{m}}\cdot\sum_{i=1}^{\hat{m}} S_i. 
\end{equation}
%
The ground truth proportion of $s$ is denoted by
\begin{equation}\label{eq-hoeffding-pre01}
    r^*_s = \expt[S].
\end{equation}
From $\{\hat{S}_i:i\in\{1,\cdots,\hat{m}\}\}$ 
we obtain an estimate of $r^*_s$:  
\begin{equation}\label{eq-hoeffding-pre02}
    \hat{r}_s = \frac{1}{\hat{m}}\cdot \sum_{i=1}^{\hat{m}} \hat{S}_i.
\end{equation}
Based on the fact that the prediction error of $g_{\text{d}}$ is $\epsilon_d$, we have
\begin{equation}\label{eq-d-error}
    |\hat{r}^*_s - \hat{r}_s|\leq \epsilon_d.
\end{equation}
As $g_{\text{d}}$ is unbiased by our assumption, we have $\expt[\hat{S}] = \expt[S]$ which implies 
\begin{equation}\label{eq-hoeffding-pre03}
    \expt\left[\sum_{i=1}^{\hat{m}} \hat{S}_i\right] = \expt\left[\sum_{i=1}^{\hat{m}} S_i\right]=\hat{m}\cdot\expt[S] = \hat{m}\cdot r^*_s.
\end{equation}
Accordingly, for any $t\geq 0$, Eq.~\eqref{eq-hoeffding-pre00}, \eqref{eq-hoeffding-pre03}, and Hoeffding's inequality~\cite{hoeffding} implies
\begin{equation}\label{eq-hoeffding-orig}
\begin{aligned}
    \prob\left(\left|\hat{m}\cdot \hat{r}^*_s - \hat{m}\cdot r^*_s\right| \geq t\right)
     &=  \prob\left(\left|\sum_{i=1}^{\hat{m}} S_i - \expt\left[\sum_{i=1}^{\hat{m}} S_i\right]\right| \geq t\right)\\
    &\leq  2\exp{\left(-\frac{2t^2}{\hat{m}} \right)}.
\end{aligned}
\end{equation}
{
Taking $t =\epsilon\cdot \hat{m} $ for some $\epsilon\geq 0$, we have
\begin{equation}\label{eq-hoeffding-sim}
    \prob\left(\left|\hat{r}^*_s - r^*_s \right|\geq \epsilon\right) \leq 2 \exp\left(-2\hat{m}\epsilon^2\right).
\end{equation}
}
%
%
By combining Eq.~\eqref{eq-hoeffding-sim} and Eq.~\eqref{eq-d-error}, we obtain 
\begin{equation}
    \prob\left(\left|\hat{r}_s - r^*_s \right|\geq \epsilon+\epsilon_d\right) \leq 2 \exp\left(-2\hat{m}\epsilon^2\right).
\end{equation}

\end{proof}

\subsection{Experimental Setting}\label{appen-exp-setting}

We implement the proposed attacks in Python with \textit{PyTorch}\footnote{https://pytorch.org/} and conduct all experiments on a server equipped with AMD EPYC 7313 16-Core Processor $\times$64, an NVIDIA RTX A5000, and 503GB RAM, running Ubuntu 20.04 LTS.

\vspace{0.5mm}
\noindent
\textbf{Attack Implementation.} 
In the fairness poisoning attack, we employ a multilayer perceptron (MLP) as the receiver's downstream classifier.
The MLP comprises two successive modules of one convolutional layer plus one pooling layer, followed by three fully connected layers with the Rectified Linear Unit (ReLU) activation function.
For Algorithm~\ref{alg-FPA-greedy}, we utilize MINE~\cite{MINE} as the mutual information estimator $\phi$. The size $m_p$ of $\dpoi$ is fixed to 6000.
In the property inference attack, we use an MLP with the same structure as in FPA as the property discriminator of $\mathcal{A}^{(1)}_{\text{PIA}}$. 
For $\mathcal{A}^{(2)}_{\text{PIA}}$, we use the pretrained CLIP model\footnote{{https://github.com/openai/CLIP}} as the property discriminator.

\vspace{0.5mm}
\noindent
\textbf{Datasets.} 
Following prior research on property inference and fairness poisoning~\cite{propertyPoison, propertyGAN, changbias}, we use three image datasets and one tabular dataset in the experiments. The image datasets are \textit{MNIST}~\cite{mnist}, CelebFaces Attributes (\textit{CelebA})~\cite{celeba}, and the Asian Face Age Dataset (\textit{AFAD})~\cite{AFAD}. The tabular dataset is \textit{Adult}~\cite{UCI}, which consists of 14 features for predicting whether one's income will exceed 50000 per year.
For consistency, we resize all training images to $32\times 32$ before the experiments and use 6000 images~\footnote{The training datasets consisting of 6000 images are sufficient for our experiments, because we have tried the training sizes of 4000, 6000, 8000, and 10000 in our experiments and observed that the attack results remain consistent across these different training sizes. Consequently, we opt to utilize a training size of 6000 images.} to train diffusion models.
Note that in the property inference attack, the target property gender ($\{\text{male}, \text{female}\}$) in CelebA and age group ($\{[18, 20], [30, 39]\}$) in AFAD are binary properties, while the property number ($\{0,1,\cdots,$ $9\}$) in MNIST and race ($\{${White}, {Asian-Pac-Islander}, {Amer-Indian-Eskimo}, {Other}, {Black}$\}$) in Adult are properties with multiple values.
For the AFAD datasets, we randomly split the training images into two groups, labeling one group with class 0 and the other with class 1.
In the fairness poisoning attack, we pre-filter the datasets to obtain a dataset $\dbig^o$ in which all the partitions $\{\dbig^o_{s=j,y=k}$$, (j,k)\in \mathcal{S}\times \mathcal{Y}\}$ have the same size, i.e., $\dbig^o$ follows a clean and fair distribution. We further randomly sample $80\%$ records in $\dbig^o$ as the sharer's private dataset $\dbig$ and the remaining records as the receiver's test dataset $\dtest$.

\begin{table*}[!th]
\setlength{\tabcolsep}{7pt}
\caption{Performance comparison of FPA \textit{w.r.t.} different proportions of real images in few-shot learning.}\label{tb-comp-few-shot}
\center
\small
\begin{tabular}{c|c|cc||cc||cc||cc||cc}
\hline
\multirow{2}{*}{Dataset} &\multirow{2}{*}{Loss} & \multicolumn{2}{c||}{10\%} & \multicolumn{2}{c||}{20\%} & \multicolumn{2}{c||}{30\%} & \multicolumn{2}{c||}{40\%} & \multicolumn{2}{c}{50\%} \\
\cline{3-12}
  &   & FPA  & LF & FPA & LF & FPA &
   LF  & FPA  & LF & FPA & LF  \\
\hline 
\hline
\multirow{2}{*}{CelebA-A}   & $\ell_{\text{acc}}$ & 3.25\%    & 13.26\%& 3.31\%   & 10.52\%  &	2.99\%  &	 10.37\% & 2.70\% &  7.01\% & 1.56\%  & 6.77\% \\
                            & $\ell_{\text{fair}}$ & 0.2639  & 0.4431 & 0.2301  & 0.3395   & 	0.1657	& 0.3004 & 0.1261 & 0.2183 &  0.1031 &   0.2038 \\                      
\hline
\multirow{2}{*}{CelebA-S}    & $\ell_{\text{acc}}$ & 4.07\%  & 19.55\%& 3.58\%  & 16.56\% &		1.43\%&	12.97\% & 0.33\% & 9.58\% & 0.08\% &  7.89\%  \\
                            & $\ell_{\text{fair}}$ & 0.2682	 &0.4346 & 0.2017  &0.3605   & 	0.1590	& 0.3265 & 0.1223 & 0.2478 &  0.0988 &   0.1430 \\                       
\hline
\end{tabular}
\end{table*}

\subsection{The Attack Performance under Few-Shot Learning}\label{appendix-few-shot}

Data sharing is typically advocated within data-scarce settings~\cite{zeroshoticlr}, where zero-shot learning is widely utilized in various data-sharing studies~\cite{zeroshoticlr, healthRecordSharing, TabDDPM, zeroshotdiversity}.
In addition, several studies~\cite{augStableDiffusion, fewshotStableDiffusion} also explore few-shot learning in data-sharing applications, wherein synthetic data is combined with limited real images to enhance model robustness. To validate the efficacy of our fairness poisoning attack within the context of few-shot learning, we conducted supplementary experiments employing DDPM as the experimental model. The poisoning proportion $\alpha$ was set to 0.5. We varied the proportions of real images within the composed training datasets at 10\%, 20\%, 30\%, 40\%, and 50\% to perform few-show learning at the receiver's side and examine the effects of our fairness poisoning attack. The losses of accuracy and fairness are shown in Table~\ref{tb-comp-few-shot}.

These results demonstrate the continued effectiveness of our attack in the context of few-shot learning. Even in scenarios where real images account for 50\% of the dataset, the fairness poisoning attack is still capable of compromising the fairness of downstream classifiers to some extent.

\subsection{The Impact of Model Underfitting}\label{appen-exp-underfitting}
Given that the effectiveness of the proposed attacks is contingent on the distribution coverage of diffusion models, where $\prob_{\dbig}\approx \prob_{\dhat}$, one natural question is whether or not the performance of the proposed attacks can be impacted by underfitted diffusion models.
In other words, we need to examine whether distribution coverage remains robust in underfitted diffusion models. 
To address this question, we perform PIA on underfitted models using CelebA as the experimental dataset, with a ground truth property proportion of $r_s=0.5$. The sampling size of PIA is set to 1000. 
Considering that the training epoch used in Section~\ref{sec-exp} for DDPM, NCSN, and SDEM is 1500, we train underfitted models with varying training epochs among $\{600,	800, 1000, 1200, 1400\}$. For reference, the training losses for DDPM at different training epochs are 0.123, 0.088, 0.076, 0.049, and 0.037. 
The $\ell_1$ losses of PIA performed on those underfitted models are summarized in Table~\ref{tb-PIA-underfitting}.
One main observation from Table~\ref{tb-PIA-underfitting} is that the $\ell_1$ losses remain relatively consistent when the training epoch exceeds 800, indicating that distribution coverage remains intact in moderately underfitted diffusion models.
When the training epoch falls below 600, the attack losses become relatively larger. The reason is that diffusion models under such training epochs are excessively underfitted, resulting in synthetic images with less distinguishable visual features.
In summary, moderate underfitting in diffusion models has a negligible impact on the proposed attacks, while severe underfitting can relatively degrade the attack performance. However, severe underfitting can also harm model utility and make synthetic images unsuitable for data-sharing applications.

\begin{table}[t!]
\setlength{\tabcolsep}{6pt}
\caption{Comparison of PIA $\ell_1$ losses \textit{w.r.t.} different training epochs.}\label{tb-PIA-underfitting}
\center
\begin{small}
\begin{tabular}{c|c|ccccc}
\hline
Dataset &Attack & 600 & 800 & 1000 & 1200 & 1400\\
\hline 
\hline
\multirow{2}{*}{DDPM}      & $\mathcal{A}^{(1)}_{\text{PIA}}$ & 0.001&	0.026&	0.001&	0.049&	0.023  \\
                            & $\mathcal{A}^{(2)}_{\text{PIA}}$ & 0.036&	0.042&	0.021&	0.047&	0.025   \\                      
\midrule
\multirow{2}{*}{NCSN}    & $\mathcal{A}^{(1)}_{\text{PIA}}$  & 0.078&	0.004&	0.029&	0.045&	0.004 \\
                            & $\mathcal{A}^{(2)}_{\text{PIA}}$ & 0.069&	0.020&	0.001&	0.035&	0.023  \\    
\midrule
\multirow{2}{*}{SDEM}       & $\mathcal{A}^{(1)}_{\text{PIA}}$ & 0.044&	0.016&	0.001&	0.010&	0.023 \\
                            & $\mathcal{A}^{(2)}_{\text{PIA}}$ & 0.047&	0.021&	0.032&	0.033&	0.023    \\                              
\midrule
\end{tabular}
\end{small}
\vspace{-2mm}
\end{table}

\subsection{Possible Countermeasures}\label{appendix-sec-countermeasure}
\noindent
\textbf{Differential Privacy}.
Differential privacy (DP) is a well-established privacy-preserving mechanism that has been applied to various applications, including generative models~\cite{dpGAN, dpDiffusion} and data publishing~\cite{dpWavelet}, due to its strong theoretical guarantee.
However, the proposed PIA cannot be effectively prevented by DP to protect the private distribution of $\dbig$.
DP is typically utilized to protect record-level privacy~\cite{dpWavelet, dpDiffusion}. In contrast, the proposed PIA aims to infer group-level privacy, which is difficult to defend using DP, as demonstrated in \cite{propertyPermutation, propertyPoison}. 
One possible approach is to use group-level DP to protect the distribution privacy of $\dbig$, as introduced in \cite{jiang2024protecting, dpClient}. But this requires injecting a significant amount of noise during the training of diffusion models, which can considerately degrade the generative performance of diffusion models.
Further exploration is required for techniques that are capable of simultaneously preserving the generative performance of diffusion models and safeguarding the group-level privacy of training datasets.

\vspace{0.5mm}
\noindent
\textbf{Data Re-Sampling}.
To defend against possible property inference attacks initiated by the receiver, the sharer may consider leveraging the spirit of FPA and modifying the optimization objective described in Eq.~(\ref{eq-opt-sim}).
For example, if the sharer wants to hide the proportion of an important property $s$ in $\dbig$, the objective in Eq.~(\ref{eq-opt-sim}) can be changed to maximizing $|r_{s\in\dbig}-r_{s\in\dpoi}|$.
By doing so, the sharer can ensure that the property proportion estimated by the receiver $r_{s\in\dpoi}$ is significantly different from the ground truth $r_{s\in\dbig}$, while causing minimal harm to the utility of the receiver's models.
Note that the data utility constraint in Eq.~(\ref{eq-opt-sim}) is important in fostering trust in the collaboration between the sharer and the receiver.
Simply maximizing $|r_{s\in\dbig}-r_{s\in\dpoi}|$ can lead to low prediction performance on the receiver's side, ultimately harming the benefits of both parties in the long run.

To evaluate the efficacy of data re-sampling in mitigating property inference attacks, we choose CelebA with the sensitive property gender ``male'' and Adult with the sensitive property race ``white'' as the experimental datasets. The experimental setup is identical to that used in Fig.~\ref{fig-PIA-diff-ratio}. 
The main difference in this experiment is that prior to training diffusion models, the sharer employs a re-sampling algorithm to alter the distribution of protected properties. This algorithm is derived from Algorithm~\ref{alg-FPA-greedy}, with lines \ref{alg-re-sample-start} - \ref{alg-maxSY-end} modified to identify the subset of records that maximizes $|r_{s\in\dbig}-r_{s\in\dpoi}|$.
The results of property inference attacks after data re-sampling are depicted in Fig.~\ref{fig-defense-PIA}, while the corresponding accuracy degradation caused by the re-sampling algorithm is presented in Table \ref{tb-defense-PIA}. 
Note that the PIA performed on Adult needs no property discriminators. Nonetheless, we depict the results on Adult in Fig.~\ref{subfig-defense-PIA-noaux} for a direct comparison.
As evident from Fig.~\ref{fig-defense-PIA}, the application of data re-sampling techniques can diminish the effectiveness of property inference attacks, making them comparable to random guess. Simultaneously, this approach inflicts minimal harm on data utility and downstream classifiers, as demonstrated in Table \ref{tb-defense-PIA}.

\vspace{0.5mm}
\noindent
\textbf{Data Auditing}. 
Correspondingly, the receiver can also utilize PIA to audit the sampled dataset $\dhat$ and detect any training bias introduced by the sharer.
Specifically, the receiver can select a set of sensitive features and use PIA to infer their distribution histograms in $\dhat$.
Based on these histograms, the receiver can identify biased features.
For example, in the generated CelebA dataset, the gender ``male'' may account for only 0.1 of the samples (denoted as ${r}_s=0.1$ in Fig.~\ref{fig-PIA-diff-ratio}).
Then, the receiver can employ a randomized sampling method~\cite{biasSampling} to rebalance the distribution before model training. 
Additionally, PIA can be used to audit the original dataset $\dbig$ for any \textit{unintentional bias} injected during the data collection phase~\cite{fairnessDP, fairnessEO}.
As described in Section~\ref{subsec-error-bound}, even with a relatively small sample size of 100, PIA can be used to effectively identify potential bias in the datasets.
Therefore, PIA can serve as a useful practice for data auditing and ensuring the fairness of the training process in various settings.

\begin{figure}[t!]
\centering
\begin{small}
\begin{tabular}{cc}
\multicolumn{2}{c}{\hspace{0mm} \includegraphics[height=7mm]{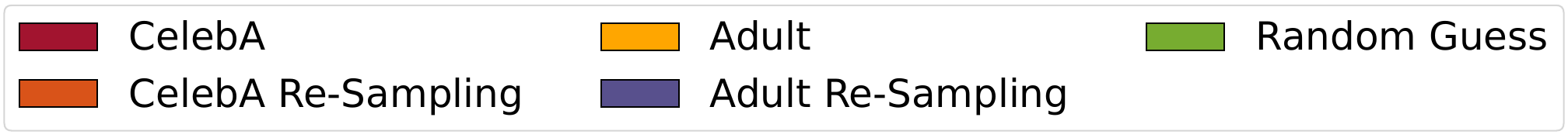}}
\vspace{-4mm}  \\
\hspace{-4mm}
\subfloat[$\mathcal{A}^{(1)}_{\text{PIA}}$]{\includegraphics[width=0.50\columnwidth]{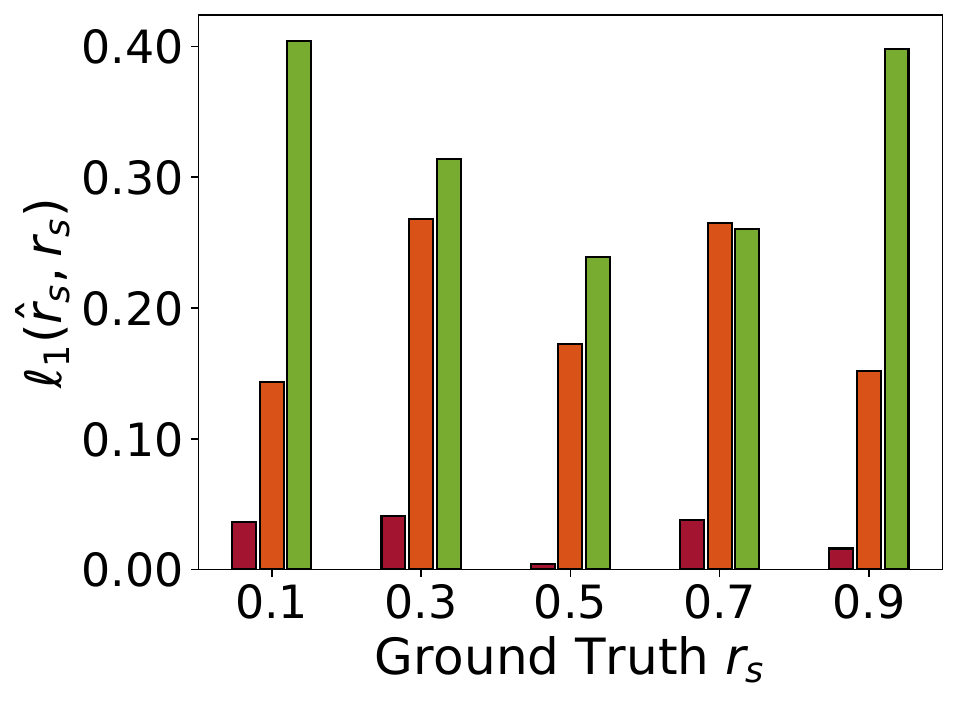}\label{subfig-defense-PIA-withaux}}
&
\hspace{-6mm}
\subfloat[$\mathcal{A}^{(2)}_{\text{PIA}}$]{\includegraphics[width=0.50\columnwidth]{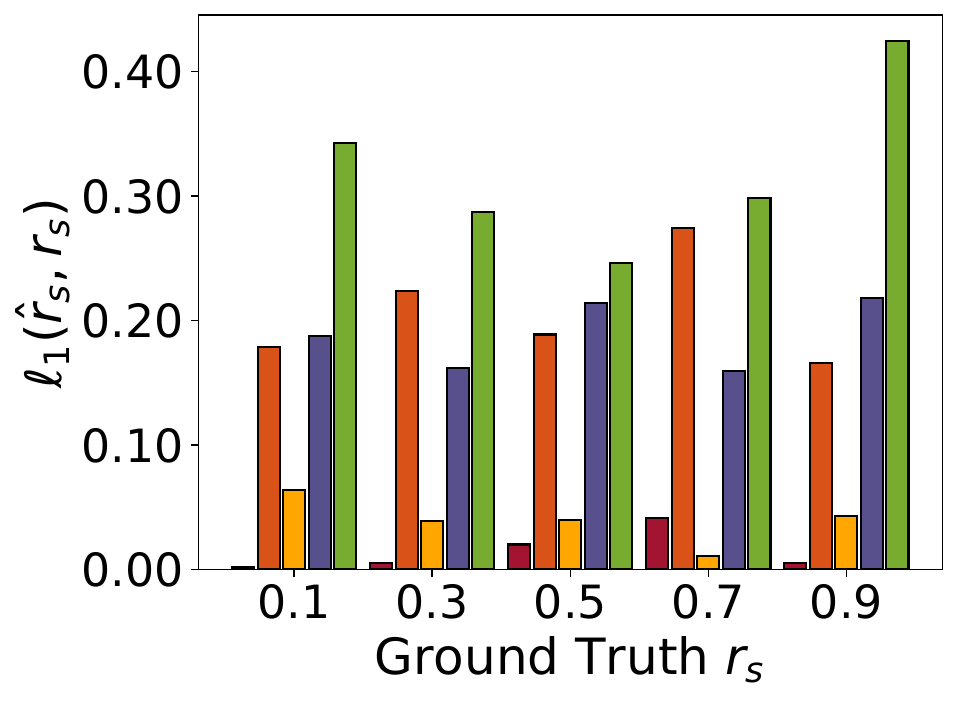}\label{subfig-defense-PIA-noaux}}
\end{tabular}
\caption{The results of PIA after data re-sampling.}
\label{fig-defense-PIA}
\end{small}
\end{figure}

\begin{table}[t!]
\setlength{\tabcolsep}{7.5pt}
\caption{The accuracy degradation $\ell_{\text{acc}}$ after employing data-resampling.}\label{tb-defense-PIA}
\center
\begin{small}
\begin{tabular}{c|ccccc}
\hline
\multirow{2}{*}{Dataset} & \multicolumn{5}{c}{Ground Truth $r_s$} \\
\cline{2-6}
& 0.1 & 0.3 & 0.5 & 0.7 & 0.9\\
\hline 
\hline
{CelebA}    & 2.15\%	&1.51\%	&0.59\%	&3.49\%	&4.08\%  \\ 
{Adult}     & 5.05\%	&1.64\%	&1.30\%	&4.00\%	&4.89\%    \\                       
\hline
\end{tabular}
\end{small}
\end{table}

\end{document}